\documentclass{article}

\usepackage{microtype}
\usepackage{graphicx}
\usepackage{subcaption}
\usepackage{booktabs}

\usepackage{hyperref}

\usepackage[accepted]{icml2026}

\usepackage{amsmath}
\usepackage{amssymb}
\usepackage{mathtools}
\usepackage{amsthm}

\usepackage{multirow}
\usepackage{algorithm}
\usepackage{algorithmic}
\usepackage{colortbl}
\usepackage{xcolor}
\usepackage{longtable}
\usepackage{marvosym}

\usepackage[capitalize,noabbrev]{cleveref}

\theoremstyle{plain}
\newtheorem{theorem}{Theorem}[section]

\theoremstyle{definition}

\theoremstyle{remark}

\usepackage[disable,textsize=tiny]{todonotes}

\icmltitlerunning{SSR-Merge: Subspace Signal Routing for Training-Free LoRA Merging in Diffusion Models}

\begin{document}

\twocolumn[
  \icmltitle{SSR-Merge: Subspace Signal Routing \\
  for Training-Free LoRA Merging in Diffusion Models}

  \icmlsetsymbol{equal}{*}
  \icmlsetsymbol{corresponding}{\Letter}

  \begin{icmlauthorlist}
    \icmlauthor{Zhengxuan Wei}{nju,vivo,shanghaitech}
    \icmlauthor{Yi Dong}{vivo,corresponding}
    \icmlauthor{Zonghui Li}{vivo,seu}
    \icmlauthor{Xianhui Lin}{vivo}
    \icmlauthor{Xing Liu}{vivo}
    \icmlauthor{Hong Gu}{vivo}
    \icmlauthor{Shaofeng Zhang}{ustc}
    \icmlauthor{Wenbin Li}{nju}
    \icmlauthor{Qi Fan}{nju,corresponding}
  \end{icmlauthorlist}

  \icmlaffiliation{nju}{School of Intelligence Science and Technology, Nanjing University, China}
  \icmlaffiliation{vivo}{vivo BlueImage Lab, vivo Mobile Communication Co., Ltd., China}
  \icmlaffiliation{shanghaitech}{ShanghaiTech University}
  \icmlaffiliation{seu}{Southeast University}
  \icmlaffiliation{ustc}{University of Science and Technology of China}

  \icmlcorrespondingauthor{Qi Fan}{fanqi@nju.edu.cn}
  \icmlcorrespondingauthor{Yi Dong}{ydong@outlook.com}

  \icmlkeywords{Machine Learning, ICML}

  \vskip 0.3in
]

\printAffiliationsAndNotice{}

\begin{abstract}
Low-Rank Adaptation (LoRA) merging can efficiently combine diverse generative capabilities from multiple trained LoRAs for a diffusion model. However, existing LoRA merging techniques often suffer from severe parameter interference, causing destructive collisions in the shared parameter space. To address this, we propose Subspace Signal Routing (SSR), which resolves interference by routing internal signals instead of performing parameter-space merge. Specifically, SSR first constructs a unified subspace by concatenating candidate LoRAs along the rank dimension. Next, SSR employs an inverse correlation matrix to decorrelate mixed signals within this space. Finally, a directional guide matrix steers these purified signals into their respective task-specific subspaces. We provide a rigorous theoretical analysis proving that SSR aligns with the Ordinary Least Squares (OLS) solution, thereby ensuring mathematical optimality. We utilize the additivity of sufficient statistics to design a streaming algorithm. This enables on-the-fly updates that significantly reduce memory overhead and computation time. Extensive experiments validate that SSR significantly outperforms state-of-the-art methods while maintaining comparable efficiency. Code is available at \url{https://github.com/nagara214/SSR-Merge}.
\end{abstract}

\section{Introduction}

Diffusion models demonstrate remarkable capabilities in synthesizing high-fidelity and diverse visual content~\cite{ho2020denoising,song2020score,rombach2022high,ramesh2022hierarchical,saharia2022photorealistic}. However, full-parameter fine-tuning for downstream adaptation is computationally expensive~\cite{wiggins2022opportunities}. To address this, Parameter-Efficient Fine-Tuning (PEFT) techniques have emerged as a standard paradigm, enabling effective adaptation while updating only a fraction of the model parameters~\cite{hu2022lora,houlsby2019parameter,li2021prefix,lester2021power,zhang2023adding}.

\begin{figure}[t]
    \centering
    \includegraphics[width=1\linewidth]{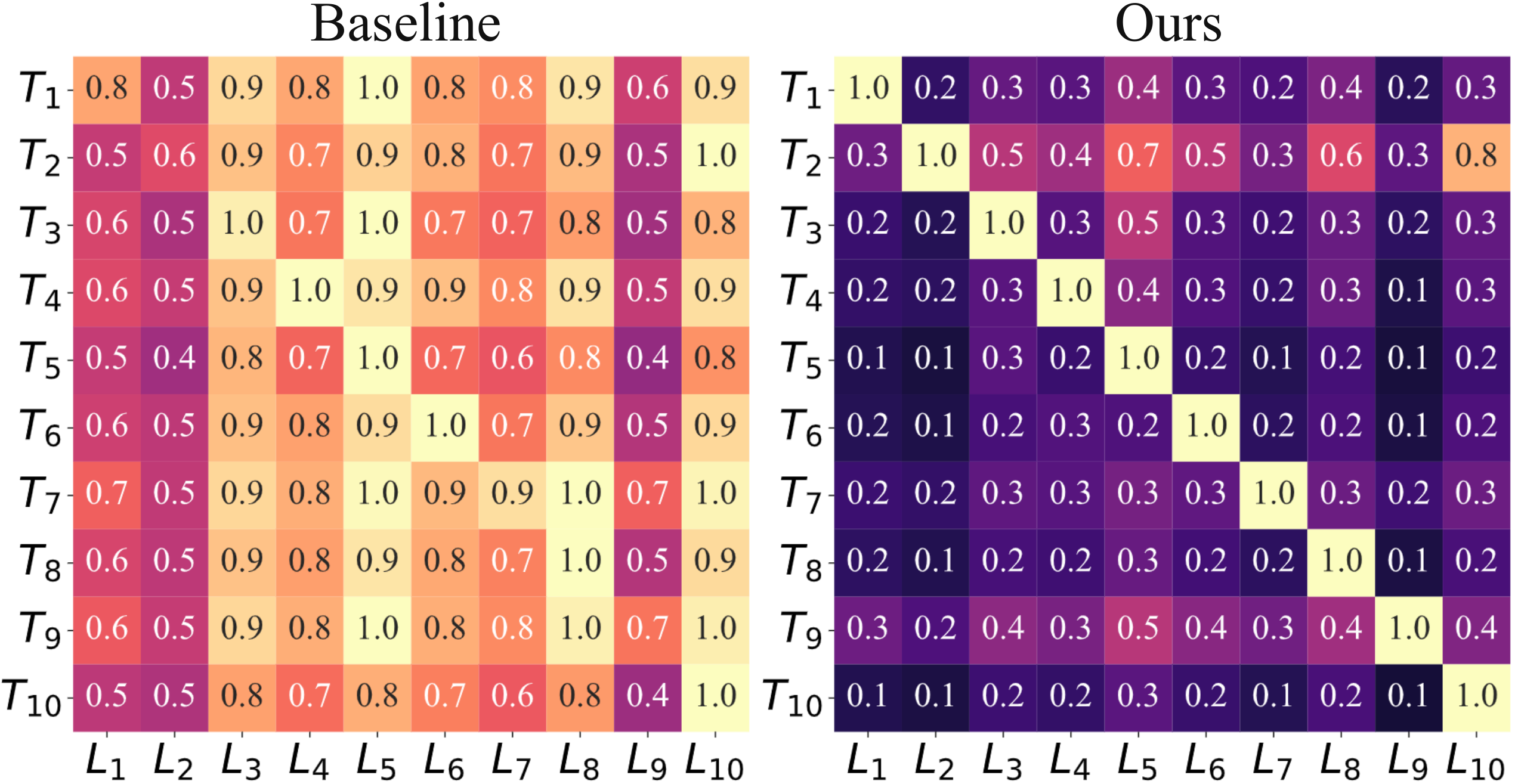}
    \caption{\textbf{Visualization of Task-LoRA Activation Alignment.} We display the max-normed activation intensity between task instructions ($T$) and LoRA modules ($L$). \textbf{Left:} The static baseline exhibits severe crosstalk, where instructions spuriously activate unrelated modules, indicating high interference. \textbf{Right:} Our SSR-Merge achieves precise signal routing, showing a clean diagonal structure where each task exclusively activates its target LoRA. Please refer to Appendix~\ref{app:protocol} for the detailed setup.}
\label{fig:heatmap}
\end{figure}

Among various PEFT approaches,  Low-Rank Adaptation (LoRA)~\cite{hu2022lora} is the most prominent, balancing efficiency and quality by injecting trainable low-rank matrices into frozen layers. This efficiency has fostered a vast ecosystem of specialized LoRAs across model hubs~\cite{wolf-etal-2020-transformers}, spanning diverse styles, characters, and instructions. This naturally leads to a need for composition, where users seek to combine distinct capabilities within a single model by merging multiple LoRAs.

However, integrating multiple task-specific LoRAs remains a significant challenge. Basic static strategies, such as linear averaging~\cite{wortsman2022model} and Task Arithmetic~\cite{ilharco2022editing}, directly superimpose parameters, inherently causing parameter interference. To address this, heuristic variants like TIES~\cite{yadav2023ties} and DARE~\cite{yu2024language} attempt to prune conflicting weights. However, they fail to prevent destructive collisions as the number of tasks scales. As visualized in Figure~\ref{fig:heatmap} (Left), the DARE baseline exhibits severe ``crosstalk", where instructions spuriously activate unrelated modules with high intensity. 

Alternatively, dynamic approaches~\cite{mao2022unipelt,wang2024lora,zhao2024loraretriever,wu2024mixture} focus on assigning adaptive weights to combined modules. However, methods learning scalar coefficients~\cite{wang2024lora,zhao2024loraretriever} struggle to resolve conflicts within the shared parameter space. Conversely, approaches utilizing non-linear gating~\cite{mao2022unipelt,wu2024mixture} break the re-parameterization property, preventing weight merging and incurring inference latency.

To address this, we propose Subspace Signal Routing (SSR). Instead of performing direct arithmetic operations in the parameter space, SSR avoids conflicts by explicitly routing the internal signals within the subspace of the LoRA.

As illustrated in Figure~\ref{fig:frame}, SSR first constructs a unified subspace by concatenating candidate LoRAs along the rank dimension. Within this subspace, we insert a training-free Router ($R$) derived from second-order statistics. Specifically, the router employs an inverse correlation matrix ($\mathbf{G}^{-1}$) to act as a whitening filter that decorrelates mixed intermediate signals, followed by a directional guide ($\mathbf{Q}$) that precisely steers these purified signals into their respective task-specific subspaces. This mechanism effectively eliminates inter-task interference, as evidenced by the clean diagonal activation pattern in Figure~\ref{fig:heatmap} (Right), demonstrating that SSR successfully disentangles conflicting signals and ensures precise task activation.

Crucially, this routing mechanism is backed by a rigorous proof of optimality. We demonstrate that the router is mathematically equivalent to the projection of the Ordinary Least Squares (OLS) estimator. This formulation ensures that SSR provides the unique analytical solution that strictly minimizes the feature reconstruction error, rather than relying on heuristic parameter tuning.

To ensure computational feasibility, we design a streaming algorithm grounded in the additivity of sufficient statistics. By accumulating covariance and cross-correlation updates on-the-fly, we eliminate the need to cache raw features, effectively reducing memory complexity. For deployment, we employ structural re-parameterization to absorb the router into the up-projection weights. This yields a merged module structurally identical to standard LoRA, guaranteeing no additional inference overhead and seamless compatibility with existing ecosystems~\cite{von-platen-etal-2022-diffusers}.

Our contributions are summarized as follows:
\begin{itemize}
    \item We introduce SSR, reframing model merging as signal routing rather than parameter arithmetic. It utilizes a unified subspace to statistically decorrelate and steer signals, thereby eliminating interference.
    
    \item We theoretically prove that our router is analytically equivalent to the projection of the Ordinary Least Squares (OLS) estimator, thereby strictly minimizing the feature reconstruction error.
    
    \item We develop a streaming algorithm and structural re-parameterization to ensure efficient deployment. Experiments demonstrate that SSR achieves superior capability preservation across diverse tasks.
\end{itemize}
\section{Related Work}

\textbf{Model Merging.} Model merging has evolved from simple weight averaging~\cite{wortsman2022model} to Task Arithmetic's~\cite{ilharco2022editing} composition of task vectors. However, direct arithmetic often suffers from parameter interference. To mitigate this, TIES-Merging~\cite{yadav2023ties} resolves conflicts via trimming and sign election, while DARE~\cite{yu2024language} establishes a standard baseline by employing random sparsification and rescaling to approximate original expectations. Beyond these mainstream paradigms, recent research has explored other directions, including masking and outlier-aware strategies~\cite{davari2024model,wang2024localizing}, spectral and representation alignment schemes that utilize geometric decompositions or feature matching~\cite{stoica2024model,gargiulo2025task,marczak2025no,panariello2025accurate,yang2024representation,ainsworth2022git,stoica2023zipit}, and adaptive weighting frameworks that estimate mixing coefficients through iterative search or analytical computation~\cite{jin2022dataless, matena2022merging,huang2023lorahub,yang2023adamerging,chen2025se,huang2024emr,lu2024twin}. 

While generic strategies often neglect the underlying parameter structure, our SSR explicitly leverages the intrinsic geometry of low-rank subspaces. We provide a framework backed by theoretical analysis, which resolves interference and preserves adapter integrity more effectively than naive weight arithmetic.

\textbf{Applications of LoRA Merging.} 
LoRA composition is widely adopted across machine learning domains. 
In LLMs and distributed systems, it facilitates multi-task generalization and federated learning~\cite{wu2024mixture,huang2023lorahub,chen2023openfed}. 
In diffusion models, research primarily focuses on specific scenarios, such as subject-style fusion~\cite{shah2024ziplora,frenkel2024implicit,shenaj2025lora,ouyang2025k,roy2025duolora} or multi-concept composition~\cite{gu2023mix, yang2025lora}. 

Given the rapid proliferation of diverse adapters, there is a growing demand for a unified merging framework. To address this, our SSR offers a general-purpose, training-free solution that resolves interference without relying on task priors or complex architectural designs.
\section{Methodology}

\begin{figure}[t]
    \centering
    \includegraphics[width=1\linewidth]{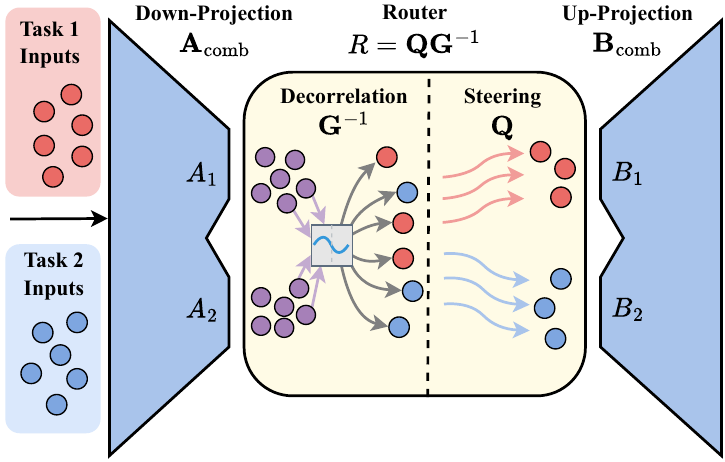}
    \caption{\textbf{Overview of Subspace Signal Routing (SSR).} The framework expands individual LoRA bottlenecks into a unified subspace via $\mathbf{A}_{\text{comb}}$. Within this space, the \textbf{Router} $R$ resolves parameter interference through a two-stage mechanism: (1) \textbf{Decorrelation} ($\mathbf{G}^{-1}$), which acts as a decorrelation operator to disentangle mixed intermediate features; and (2) \textbf{Steering} ($\mathbf{Q}$), which precisely guides the purified signals towards their target task-specific up-projection bases ($B_1, B_2$). Ideally, this linear structure allows multiple tasks to coexist without conflict.}
\label{fig:frame}
\end{figure}

We introduce Subspace Signal Routing (SSR) to resolve multi-task interference. The section is organized into problem formulation (Sec.~\ref{subsec:preliminary}), router construction (Sec.~\ref{subsec:routing}), theoretical analysis of optimality (Sec.~\ref{subsec:theoretical}), and efficient implementation (Sec.~\ref{subsec:implementation}).

\subsection{Preliminary}
\label{subsec:preliminary}

Low-Rank Adaptation (LoRA)~\cite{hu2022lora} approximates weight updates $\Delta W$ for a pre-trained matrix $W_0 \in \mathbb{R}^{d \times d}$ via low-rank decomposition. Let $A \in \mathbb{R}^{r \times d}$ and $B \in \mathbb{R}^{d \times r}$ denote the down- and up-projection matrices, respectively, where $r \ll d$. The adapted forward pass for an input activation $x \in \mathbb{R}^d$ is formulated as:
\begin{equation}
    h = W_0 x + \Delta W x = W_0 x + BA x.
\end{equation}

We consider the problem of merging $K$ distinct LoRA modules $\{A_k, B_k\}_{k=1}^K$ trained on different tasks. Our objective is to synthesize a single unified update $\Delta W_{\text{merged}}$ that preserves the capabilities of all $K$ tasks while minimizing mutual interference.

\subsection{Subspace Signal Routing}
\label{subsec:routing}

To resolve parameter interference among the $K$ tasks, we propose Subspace Signal Routing (SSR). This method builds a unified space where conflicting features are separated via a routing matrix $R$.

\textbf{Unified Projection Space.}
We first combine the individual subspaces into a unified coordinate system. We construct a unified down-projection $\mathbf{A}_{\text{comb}} \in \mathbb{R}^{Kr \times d}$ by vertically stacking the task-specific down-projections $A_k$, and a unified up-projection $\mathbf{B}_{\text{comb}} \in \mathbb{R}^{d \times Kr}$ by horizontally concatenating the up-projections $B_k$:
\begin{equation}
    \mathbf{A}_{\text{comb}} = \begin{bmatrix} A_1 \\ \vdots \\ A_K \end{bmatrix}, \quad
    \mathbf{B}_{\text{comb}} = \begin{bmatrix} B_1 & \dots & B_K \end{bmatrix}.
\end{equation}
This formulation expands the rank from $r$ to $Kr$. Our goal is to design a router matrix $R \in \mathbb{R}^{Kr \times Kr}$ inserted between $\mathbf{A}_{\text{comb}}$ and $\mathbf{B}_{\text{comb}}$ to control the signal flow.

\textbf{Routing via Statistics.}
We design the router using second-order statistics derived from calibration data. Let $X_k \in \mathbb{R}^{d \times N}$ be the input features for the $k$-th task, and $Z_k = \mathbf{A}_{\text{comb}} X_k \in \mathbb{R}^{Kr \times N}$ be its projection in the unified space.

First, the \textbf{Correlation matrix} $\mathbf{G} \in \mathbb{R}^{Kr \times Kr}$ measures the total correlation structure in the combined space, defined as the sum of outer products of the projected features:
\begin{equation}
    \mathbf{G} := \sum_{k=1}^K Z_k Z_k^\top.
\end{equation}

Second, the \textbf{Directional Guide} $\mathbf{Q} \in \mathbb{R}^{Kr \times Kr}$ captures the cross-covariance between the unified projections and the task-specific targets. For the $k$-th task, with the ideal local activation $H_k = A_k X_k$, we define the task-specific routing block $\mathbf{Q}_k = H_k Z_k^\top$. The global guide $\mathbf{Q}$ is constructed by stacking these blocks:
\begin{equation}
    \mathbf{Q} := \begin{bmatrix} \mathbf{Q}_1 \\ \vdots \\ \mathbf{Q}_K \end{bmatrix} = \begin{bmatrix} (A_1 X_1) Z_1^\top \\ \vdots \\ (A_K X_K) Z_K^\top \end{bmatrix}.
\end{equation}

\textbf{Router Formulation.}
Based on these statistics, we construct the Subspace Router $R$ by combining the directional guide with the inverse energy map:
\begin{equation}
    R := \mathbf{Q} \mathbf{G}^{-1}.
\end{equation}
In this design, $\mathbf{G}^{-1}$ acts as a decorrelation operator to remove the correlation in the shared signal space, while $\mathbf{Q}$ guides these purified signals towards the up-projection bases $B_k$ of their respective tasks.

\subsection{Theoretical Analysis of Optimality}
\label{subsec:theoretical}

In this section, we reveal that the proposed router $R$ is not merely a heuristic design, but the exact analytical solution that minimizes the reconstruction error across all tasks.

\textbf{Geometric Decomposition.}
To understand the mathematical behavior of the router, we analyze the structure of the Directional Guide $\mathbf{Q}$. By the property of the Moore-Penrose pseudoinverse, we have:
\begin{equation}
    \mathbf{B}_{\text{comb}}^\dagger B_k = \mathbf{E}_k,
\end{equation}
where $\mathbf{E}_k$ is the canonical block selection matrix. Applying this geometric property to the definition of $\mathbf{Q}$, we can substitute the selection matrix with the projection operator:
\begin{equation}
    \begin{split}
        \mathbf{Q} &= \sum_{k=1}^K \mathbf{E}_k (A_k X_k) Z_k^\top \\
                   &= \sum_{k=1}^K (\mathbf{B}_{\text{comb}}^\dagger B_k) (A_k X_k) Z_k^\top.
    \end{split}
\end{equation}
Since the inverse projector $\mathbf{B}_{\text{comb}}^\dagger$ is invariant to the task index $k$, it factors out of the summation. Recognizing that $B_k A_k X_k$ corresponds to the target signal $Y_k$, we arrive at the unified form:
\begin{equation}
    \mathbf{Q} = \mathbf{B}_{\text{comb}}^\dagger \sum_{k=1}^K \underbrace{(B_k A_k X_k)}_{Y_k} Z_k^\top = \mathbf{B}_{\text{comb}}^\dagger \left( \sum_{k=1}^K Y_k Z_k^\top \right).
\end{equation}

\textbf{Equivalence to Least Squares.}
Substituting this factorized form back into the router definition $R = \mathbf{Q}\mathbf{G}^{-1}$, the expression simplifies significantly:
\begin{equation}
    R = \mathbf{B}_{\text{comb}}^\dagger \underbrace{\left( \sum_{k=1}^K Y_k Z_k^\top \right) \left( \sum_{k=1}^K Z_k Z_k^\top \right)^{-1}}_{\hat{\beta}_{\text{OLS}}}.
\end{equation}
Here, we identify the term $\hat{\beta}_{\text{OLS}}$ as the standard Ordinary Least Squares estimator. In statistical signal processing, $\hat{\beta}_{\text{OLS}}$ is rigorously defined as the unique matrix that minimizes the regression residual $\| \beta Z - Y \|_F^2$.

\textbf{Optimality Conclusion.}
The derivation above leads directly to our main theoretical guarantee.

\begin{theorem}[Reconstruction Optimality]
\label{thm:optimality}
The Subspace Router $R$ minimizes the reconstruction objective $\mathcal{L}(R) = \sum_{k=1}^K \| \mathbf{B}_{\text{comb}} R Z_k - Y_k \|_F^2$.
\end{theorem}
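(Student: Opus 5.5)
We plan to reduce the problem to an unconstrained least-squares problem in the product $M := \mathbf{B}_{\text{comb}} R \in \mathbb{R}^{d \times Kr}$, and then show that the OLS minimizer is reachable by our specific $R$. First, stack the calibration data horizontally: $Z := [Z_1, \dots, Z_K] \in \mathbb{R}^{Kr \times KN}$ and $Y := [Y_1, \dots, Y_K] \in \mathbb{R}^{d \times KN}$. The Frobenius norm is additive over column blocks, so $\mathcal{L}(R) = \| \mathbf{B}_{\text{comb}} R Z - Y \|_F^2$. We also have $ZZ^\top = \mathbf{G}$ and $YZ^\top = \sum_k Y_k Z_k^\top$. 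Throughout, we use the standing assumptions already implicit in the construction: $\mathbf{G}$ is invertible, and $\mathbf{B}_{\text{comb}}$ has full column rank, which is what makes $\mathbf{B}_{\text{comb}}^\dagger B_k = \mathbf{E}_k$ hold.

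Second, we relax the problem. For any $R$, the matrix $M = \mathbf{B}_{\text{comb}} R$ is some $d \times Kr$ matrix. Hence
\[
\mathcal{L}(R) \ge \min_{M} \| M Z - Y \|_F^2 .
\]
The right-hand side is a convex quadratic in $M$. Its normal equations are $M ZZ^\top = YZ^\top$, so its unique minimizer is $\hat{\beta}_{\text{OLS}} = \big(\sum_k Y_k Z_k^\top\big)\mathbf{G}^{-1}$, as identified above. This gives a lower bound on $\mathcal{L}$ over all routers.

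Third, we show the bound is attained by $R = \mathbf{Q}\mathbf{G}^{-1}$. By the factorization derived above, $R = \mathbf{B}_{\text{comb}}^\dagger \hat{\beta}_{\text{OLS}}$, so $\mathbf{B}_{\text{comb}} R = P\,\hat{\beta}_{\text{OLS}}$, where $P := \mathbf{B}_{\text{comb}}\mathbf{B}_{\text{comb}}^\dagger$ is the orthogonal projector onto $\operatorname{col}(\mathbf{B}_{\text{comb}})$. The key observation is that each target $Y_k = B_k A_k X_k$ has its columns in $\operatorname{col}(B_k) \subseteq \operatorname{col}(\mathbf{B}_{\text{comb}})$. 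Therefore $PY_k = Y_k$, and so
\[
P\hat{\beta}_{\text{OLS}} = \Big(\sum_k PY_k Z_k^\top\Big)\mathbf{G}^{-1} = \hat{\beta}_{\text{OLS}}.
\]
Hence $\mathbf{B}_{\text{comb}} R = \hat{\beta}_{\text{OLS}}$, $\mathcal{L}(R)$ equals the relaxed minimum, and $R$ is a global minimizer. Under full column rank of $\mathbf{B}_{\text{comb}}$, the map $R \mapsto \mathbf{B}_{\text{comb}} R$ is injective, which gives uniqueness as a bonus.

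The main obstacle is this realizability step. The OLS solution lives in the unconstrained space of $d \times Kr$ matrices, whereas the router can only produce matrices whose columns lie in $\operatorname{col}(\mathbf{B}_{\text{comb}})$. Without the column-space argument, $R$ would only minimize the error of the projected targets $PY_k$, not of $Y_k$ itself. As a cross-check, we would verify the result directly via the first-order condition
\[
\nabla_R \mathcal{L} = 2\mathbf{B}_{\text{comb}}^\top(\mathbf{B}_{\text{comb}} R Z - Y)Z^\top = 0,
\]
which, after substituting $R$, reduces to $\mathbf{B}_{\text{comb}}^\top(P - I)YZ^\top = 0$. This holds by the same column-space fact, and convexity of $\mathcal{L}$ in $R$ then concludes.
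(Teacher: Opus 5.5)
Your proposal is correct and follows the paper's own route. You write $R=\mathbf{B}_{\text{comb}}^\dagger\hat{\beta}_{\text{OLS}}$, note that each $Y_k=B_kA_kX_k$ lies in $\operatorname{col}(\mathbf{B}_{\text{comb}})$ so that $P=\mathbf{B}_{\text{comb}}\mathbf{B}_{\text{comb}}^\dagger$ fixes $\hat{\beta}_{\text{OLS}}$, and conclude from OLS optimality; your relaxation bound $\mathcal{L}(R)\ge\min_M\|MZ-Y\|_F^2$ and the explicit assumptions (invertible $\mathbf{G}$, full column rank of $\mathbf{B}_{\text{comb}}$) make rigorous what the paper's proof leaves implicit.

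One small correction to your closing remark. Because $\mathbf{B}_{\text{comb}}^\top(P-I)=0$ for any matrix, the gradient at this $R$ vanishes and $R$ minimizes $\mathcal{L}$ even without the column-space fact, since $\mathcal{L}$ splits as $\|\mathbf{B}_{\text{comb}}RZ-PY\|_F^2+\|(I-P)Y\|_F^2$ and the second term is a constant no router can change. The column-space fact is needed only to show $\mathbf{B}_{\text{comb}}R=\hat{\beta}_{\text{OLS}}$, i.e., that the attained minimum equals the unconstrained OLS optimum.
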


\begin{proof}
As derived, our router implements the projection of the optimal estimator:
\begin{equation}
    R = \mathbf{B}_{\text{comb}}^\dagger \hat{\beta}_{\text{OLS}}.
\end{equation}
Consequently, the merged model output is given by $\hat{Y} = \mathbf{B}_{\text{comb}} R Z$. Substituting the router form:
\begin{equation}
    \hat{Y} = (\mathbf{B}_{\text{comb}}\mathbf{B}_{\text{comb}}^\dagger) \hat{\beta}_{\text{OLS}} Z.
\end{equation}
Since the targets $Y_k$ (and thus $\hat{\beta}_{\text{OLS}}$) lie within the range of the up-projection $\mathbf{B}_{\text{comb}}$, the projection operator $\mathbf{B}_{\text{comb}}\mathbf{B}_{\text{comb}}^\dagger$ acts as an identity mapping, yielding $\hat{Y} = \hat{\beta}_{\text{OLS}} Z$. Since $\hat{\beta}_{\text{OLS}}$ is defined as the minimizer of the Frobenius norm discrepancy, $R$ inherently achieves the minimal possible reconstruction error.
\end{proof}

\subsection{Efficient Implementation}
\label{subsec:implementation}

\textbf{Streaming Calculation.}
A naive offline construction necessitates caching activation maps for all $K$ tasks across the entire model, resulting in a prohibitive memory complexity of $\mathcal{O}(K \cdot N_{\text{layer}} \cdot T \cdot D_{\text{feat}})$, where $N_{\text{layer}}$ denotes the total number of LoRA layers, $T$ the calibrated timesteps, and $D_{\text{feat}}$ the feature volume per layer. To address this, we employ an exact streaming strategy grounded in the additive property of sufficient statistics. For an incoming feature batch $x_t$ belonging to task $k$, we update the statistics on-the-fly:
\begin{align}
    \mathbf{G} &\leftarrow \mathbf{G} + (\mathbf{A}_{\text{comb}} x_t)(\mathbf{A}_{\text{comb}} x_t)^\top \\
    \mathbf{Q} &\leftarrow \mathbf{Q} + \mathbf{E}_k (A_k x_t)(\mathbf{A}_{\text{comb}} x_t)^\top
\end{align}
where $\mathbf{E}_k$ places the vector into the $k$-th block of the stacked space.
The raw feature batch $x_t$ is discarded after the update. This guarantees numerical equivalence to the solution while reducing the space complexity to a constant $\mathcal{O}((Kr)^2)$.

\textbf{Structural Re-parameterization.}
For deployment, we exploit linearity to absorb the router $R$ into the up-projection:
\begin{equation}
    \tilde{\mathbf{B}}_{\text{comb}} = \mathbf{B}_{\text{comb}} R.
\end{equation}
The resulting $(\mathbf{A}_{\text{comb}}, \tilde{\mathbf{B}}_{\text{comb}})$ remains structurally identical to standard LoRA, ensuring seamless compatibility with existing frameworks. Moreover, this form allows the update to be fully merged into the backbone weights, thus achieving strict zero inference latency.

\textbf{Overall Pipeline.} The complete pipeline integrating these strategies is summarized in Algorithm~\ref{alg:ssr}.
\definecolor{code_green}{RGB}{34,139,34}
\newcommand{\mycomment}[1]{\textcolor{code_green}{\textit{// #1}}}
\renewcommand{\algorithmicrequire}{\textbf{Input:}}
\renewcommand{\algorithmicensure}{\textbf{Output:}}

\begin{algorithm}[t]
\caption{Streaming Subspace Signal Routing (SSR)}
\label{alg:ssr}
\begin{algorithmic}[1]
\REQUIRE Task LoRAs $\{(A_k, B_k)\}_{k=1}^K$, Calibration Data $\mathcal{D}$
\ENSURE Merged Model $(\mathbf{A}_{\text{comb}}, \tilde{\mathbf{B}}_{\text{comb}})$

\STATE \mycomment{1. Initialization}
\STATE Construct global projections: $\mathbf{A}_{\text{comb}} \leftarrow [A_1; \dots; A_K]$, $\mathbf{B}_{\text{comb}} \leftarrow [B_1, \dots, B_K]$
\STATE Initialize statistics: $\mathbf{G} \leftarrow \mathbf{0}, \mathbf{Q} \leftarrow \mathbf{0}$

\STATE \mycomment{2. Streaming Accumulation}
\FOR{task $k = 1$ \textbf{to} $K$}
    \STATE Load LoRA module $(A_k, B_k)$ into base model
    \FOR{input batch $x_t$ from $\mathcal{D}$}
        \STATE $z_t \leftarrow \mathbf{A}_{\text{comb}} x_t$ 
        \STATE $\mathbf{G} \leftarrow \mathbf{G} + z_t z_t^\top$ 
        \STATE $\mathbf{Q} \leftarrow \mathbf{Q} + \mathbf{E}_k (A_k x_t) z_t^\top$ 
    \ENDFOR
    \STATE Unload LoRA module $(A_k, B_k)$
\ENDFOR

\STATE \mycomment{3. Router Construction \& Re-parameterization}
\STATE $R \leftarrow \mathbf{Q} \mathbf{G}^{-1}$ 
\STATE $\tilde{\mathbf{B}}_{\text{comb}} \leftarrow \mathbf{B}_{\text{comb}} R$ 

\STATE \textbf{return} $(\mathbf{A}_{\text{comb}}, \tilde{\mathbf{B}}_{\text{comb}})$
\end{algorithmic}
\end{algorithm}

\subsection{Data Construction and Analysis}
\label{subsec:data_analysis}

\textbf{One-Shot Calibration.}
We obtain calibration data without relying on external datasets. Instead, we assign a single representative input for each task. For instance, given a LoRA trained on a specific dog concept, we simply use a text prompt like \texttt{``A [V] dog''}, crucially without requiring any ground-truth images. The input features to the LoRA modules are directly extracted as $X_k$. Notably, while generative models typically involve multi-step inference, we perform forward propagation for only a single timestep to construct $R$ (see Appendix~\ref{app:ablation_calib} for empirical validation and theoretical justification on why a single timestep suffices).

\textbf{Statistical Sufficiency.}
Despite the one-shot setting, the aggregated feature sequence provides an effective sample size $N$ (total tokens) on the order of $10^3$. This significantly exceeds the subspace dimension ($N \gg Kr$), ensuring that the correlation matrix $\mathbf{G}$ is well-conditioned and invertible. Moreover, as proven in Appendix~\ref{app:error_bound}, the estimation error is bounded by $\mathcal{O}(\sqrt{Kr/N})$, guaranteeing a tight approximation of the optimal router.
\section{Experiment}
\label{sec:experiments}

In this section, we conduct a comprehensive evaluation to validate the effectiveness of Subspace Signal Routing (SSR). Our experiments are designed to answer the following three core research questions:

\begin{enumerate}
    \item \textbf{RQ1:} Can the merged LoRA effectively preserve the performance of the individual single LoRA?
    \item \textbf{RQ2:} Can the merged model execute instructions that require multiple LoRA capabilities simultaneously?
    \item \textbf{RQ3:} Is the proposed method effective across different types of diffusion tasks, such as image editing?
\end{enumerate}

\subsection{RQ1: Single-Task Capability Preservation}
\label{subsec:rq1}

\begin{table*}[ht]
    \centering
    \caption{Quantitative evaluation of single-task capability preservation on FLUX.1-dev under multi-task interference. We report the average DINOv2 and CLIP scores across varying numbers of merged tasks ($K$). The Upper Bound (shown in gray) represents the performance of a standalone LoRA without merging.}
    \label{tab:exp1_flux}
    \setlength{\tabcolsep}{8pt}
    \begin{tabular}{l cc cc cc cc}
        \toprule
        \multirow{2.5}{*}{\textbf{Method}} & \multicolumn{2}{c}{\textbf{K=3}} & \multicolumn{2}{c}{\textbf{K=5}} & \multicolumn{2}{c}{\textbf{K=7}} & \multicolumn{2}{c}{\textbf{K=9}} \\
        \cmidrule(lr){2-3} \cmidrule(lr){4-5} \cmidrule(lr){6-7} \cmidrule(lr){8-9}
        & DINO & CLIP & DINO & CLIP & DINO & CLIP & DINO & CLIP \\
        \midrule
        Average & 0.4374 & 0.7400 & 0.3808 & 0.6944 & 0.3973 & 0.7095 & 0.3739 & 0.7103 \\
        Task Arithmetic & 0.5814 & 0.7395 & 0.4935 & 0.7188 & 0.5165 & 0.6546 & 0.5356 & 0.6831 \\
        TIES & 0.6264 & 0.7117 & 0.5058 & 0.6953 & 0.5095 & 0.6986 & 0.4723 & 0.6839 \\
        DARE & 0.7171 & 0.7574 & 0.6584 & 0.7002 & 0.6087 & 0.7464 & 0.5837 & 0.7376 \\
        RobustMerge & 0.7220 & 0.7710 & 0.6670 & 0.7550 & 0.5910 & 0.7480 & 0.5480 & 0.7330 \\
        IterIS & 0.7030 & 0.7800 & 0.6720 & 0.7660 & 0.6420 & 0.7580 & 0.6240 & 0.7520 \\
        \rowcolor{gray!20}
        \textbf{SSR (Ours)} & \textbf{0.7342} & \textbf{0.8144} & \textbf{0.7059} & \textbf{0.7951} & \textbf{0.6868} & \textbf{0.7798} & \textbf{0.6713} & \textbf{0.7850} \\
        \rowcolor{gray!20}
        \textit{Recovery Rate} & 98.6\% & 96.4\% & 94.8\% & 94.1\% & 92.3\% & 92.3\% & 90.2\% & 92.9\% \\
        \cmidrule(lr){1-9}
        \textcolor{gray}{\textit{Upper Bound}} & \textcolor{gray}{0.7443} & \textcolor{gray}{0.8452} & \textcolor{gray}{0.7443} & \textcolor{gray}{0.8452} & \textcolor{gray}{0.7443} & \textcolor{gray}{0.8452} & \textcolor{gray}{0.7443} & \textcolor{gray}{0.8452} \\
        \bottomrule
    \end{tabular}
\end{table*}

\textbf{Objective and Protocol.} To evaluate single-task preservation, we adopt a variable-scale merging protocol. We draw from the pool of 10 LoRAs to construct merged models with varying task counts $K \in \{1, 3, 5, 7, 9\}$. Specifically, for a specific target task, we merge its LoRA with $K-1$ randomly selected ``distractor" LoRAs and evaluate the merged model's ability to generate the target subject. The case $K=1$ represents the Single LoRA baseline (i.e., no merging), serving as the performance upper bound (Oracle). As $K$ increases from 3 to 9, the merged model faces intensifying parameter interference, rigorously testing the robustness of the merging method.

\textbf{Implementation.} We conduct experiments using FLUX.1-dev~\cite{flux2024} as the foundation model. All experiments are performed on a single NVIDIA A100 GPU. We curate a benchmark of 10 distinct objects from the DreamBooth dataset~\cite{ruiz2023dreambooth}, selected to maximize structural and textural diversity (see Appendix~\ref{subsec:dataset} for the full list). For each subject, we train a dedicated LoRA with rank $r=32$, learning rate $1e-4$, and 500 training steps. To further verify architectural generality, we additionally evaluate SSR on Qwen-Image~\cite{wu2025qwenimagetechnicalreport} in Appendix~\ref{app:viz_qwen}.

\textbf{Baselines.} We benchmark SSR against standard training-free merging paradigms, including Linear Average~\cite{wortsman2022model} and Task Arithmetic~\cite{ilharco2022editing}. Notably, we demonstrate in Appendix~\ref{subsec:baselines} that Task Arithmetic is mathematically equivalent to a special case of our framework where the routing matrix is the identity ($R=I$), thereby serving as a naive ensemble baseline without signal regulation. We also compare with state-of-the-art interference-resolution methods: TIES-Merging~\cite{yadav2023ties}, DARE~\cite{yu2024language}, the PEFT-specific RobustMerge baseline~\cite{zeng2025robustmerge}, and IterIS~\cite{chen2025iteris}. Details for all baselines are provided in Appendix~\ref{subsec:baselines}. Additionally, we evaluated RegMean~\cite{jin2022dataless}, but found it to suffer from severe numerical instability in this setting; a detailed analysis is provided in Appendix~\ref{app:regmean_analysis}.

\textbf{Rank Budget Fairness.} SSR does not use a larger subspace capacity than the arithmetic and sparsification baselines. Directly summing $K$ LoRA updates can be exactly written as a rank-concatenated LoRA:
\begin{equation}
    \sum_{k=1}^K B_k A_k =
    \begin{bmatrix} B_1 & \dots & B_K \end{bmatrix}
    \begin{bmatrix} A_1 \\ \vdots \\ A_K \end{bmatrix}.
\end{equation}
This corresponds to our framework with the identity router $R=\mathbf{I}_{Kr}$. Thus, Task Arithmetic, TIES, DARE, and SSR operate on the same collection of $K$ low-rank updates; SSR only replaces identity routing with a statistics-derived router.

\textbf{Metrics.} To quantitatively assess knowledge preservation, we measure the fidelity of the generated images against the original ground truth training images of the specific subject. Since each subject corresponds to multiple reference images, we compute the similarity against all references and report the average. We employ two metrics: (1) CLIP-Score~\cite{radford2021learning} to evaluate high-level semantic alignment, and (2) DINOv2~\cite{oquab2023dinov2} Similarity to assess fine-grained visual identity and structural details. All methods utilize the same initial noise seeds for fair comparison.

\textbf{Quantitative Results.} Table~\ref{tab:exp1_flux} presents the evaluation results on FLUX.1-dev under varying interference levels. First, regarding absolute performance in high-interference settings ($K=9$), SSR consistently outperforms the strongest baselines, exceeding IterIS, the strongest baseline in this setting, by 0.0473 DINO and 0.0330 CLIP. Second, concerning robustness to scaling, baselines degrade significantly as task counts increase, whereas SSR remains stable across all merging scales. Finally, in terms of fidelity retention, SSR consistently recovers over 90.2\% of the single-task oracle performance on FLUX.1.

\begin{figure*}[!t]
    \centering
    \includegraphics[width=1\linewidth]{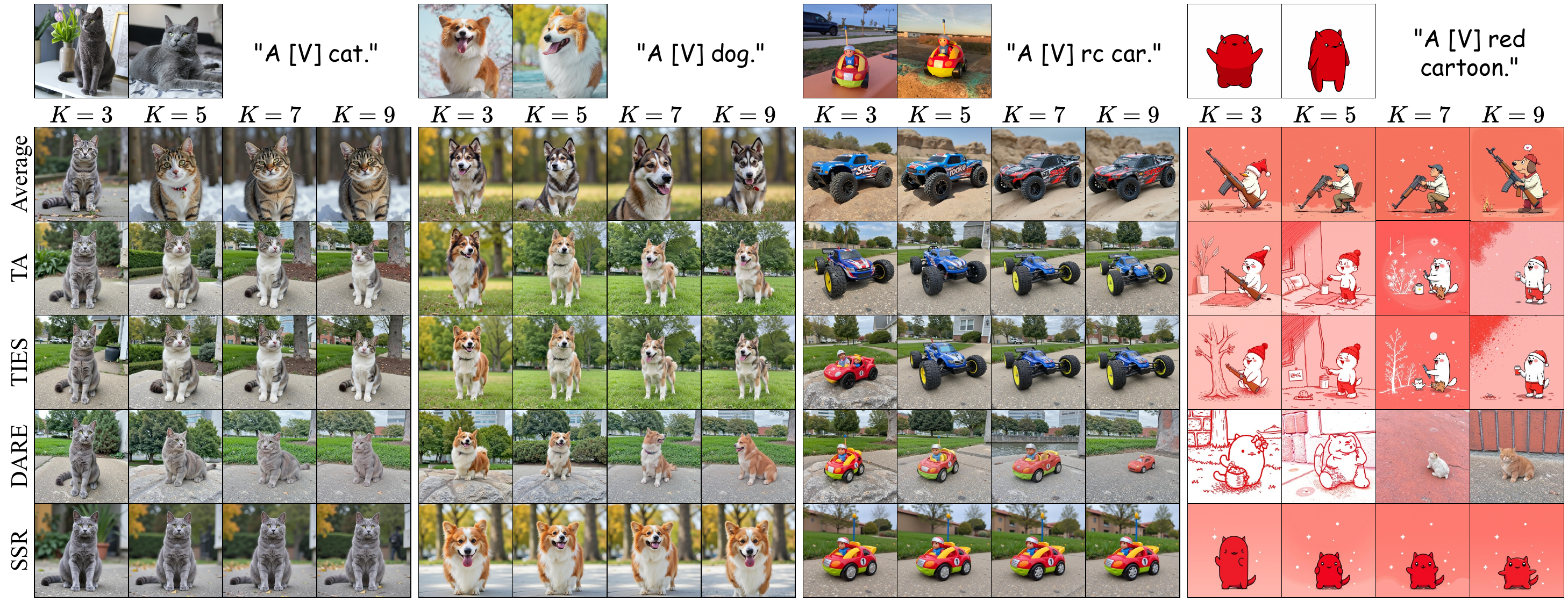}
    \caption{Visual comparison of single-task generation results under increasing merging scales with K ranging from 3 to 9. The top row displays ground truth reference images and text prompts, where [V] represents the learned unique identifier token for each specific subject. Subsequent rows present the generated outputs from Linear Average, Task Arithmetic, TIES-Merging, DARE, and SSR. For each column, the unified model containing K task-specific LoRAs is prompted to generate the target subject shown in the top row.}
    \label{fig:exp1}
\end{figure*}

\textbf{Qualitative Results.} Figure~\ref{fig:exp1} illustrates the visual fidelity of different merging methods on FLUX.1-dev. Linear Average and Task Arithmetic exhibit semantic drift as $K$ increases. In the Dog column, the specific traits of the Corgi morph into a generic husky-like appearance. TIES and DARE exhibit attribute mismatch and concept leakage. For the RC Car task, these methods fail to preserve the original red-and-yellow colors, often rendering the object in blue. In the Red Cartoon scenario, these baselines introduce unrelated elements such as Santa Claus or realistic sketches into the flat character domain. In contrast, SSR maintains the subject identity, correct attribute binding, and stylistic integrity across all tested merging scales.

\begin{table}[t]
    \centering
    \caption{Comparison of the total wall-clock time (in seconds) to merge $K$ LoRAs across all transformer blocks of FLUX.1-dev. Darker shades indicate slower processing speeds. SSR achieves high efficiency comparable to lightweight arithmetic methods.}
\label{tab:speed}
    \label{tab:speed}
    \begin{tabular}{lcccc}
        \toprule
        Method & $K=3$ & $K=5$ & $K=7$ & $K=9$ \\
        \midrule
        TIES & \cellcolor[rgb]{0.992,0.698,0.444}42.78 & \cellcolor[rgb]{0.992,0.632,0.348}50.98 & \cellcolor[rgb]{0.969,0.484,0.157}69.87 & \cellcolor[rgb]{0.887,0.332,0.031}88.93 \\
        DARE & \cellcolor[rgb]{0.997,0.915,0.832}6.96  & \cellcolor[rgb]{0.996,0.896,0.796}11.75 & \cellcolor[rgb]{0.995,0.875,0.753}16.11 & \cellcolor[rgb]{0.994,0.850,0.704}20.95 \\
        SSR  & \cellcolor[rgb]{0.996,0.905,0.815}9.50  & \cellcolor[rgb]{0.995,0.872,0.748}16.31 & \cellcolor[rgb]{0.993,0.826,0.656}25.46 & \cellcolor[rgb]{0.992,0.764,0.552}34.26 \\
        \bottomrule
    \end{tabular}
\end{table}

\textbf{Efficiency Analysis.} Table~\ref{tab:speed} reports the wall-clock time required for the merging phase. Benefiting from the one-shot calibration strategy, SSR requires only a single inference step to accumulate sufficient statistics. Consequently, under the most demanding setting ($K=9$), SSR completes the process in 34.26 seconds. This performance is approximately 2.6$\times$ faster than the optimization-based method TIES (88.93 seconds). Compared to the arithmetic baseline DARE (20.95 seconds), SSR incurs an overhead of 13.31 seconds, maintaining a comparable magnitude of efficiency while providing superior interference resolution.

\subsection{RQ2: Simultaneous Multi-Task Execution}
\label{subsec:rq2}

\textbf{Objective and Protocol.} We design a Multi-Concept Composition Protocol to evaluate whether the merged model can execute instructions requiring multiple LoRA capabilities simultaneously. This protocol focuses on generating multiple distinct subjects in a single image by merging $K$ randomly sampled LoRAs, with $K$ uniformly distributed in $\{2, 3, 4\}$. The model is then prompted with composite instructions containing trigger words for all $K$ subjects. This configuration assesses the ability of the merged model to represent all requested subjects accurately while preserving their specific visual identities.

\textbf{Implementation.} We utilize FLUX.1-dev~\cite{flux2024} and the 10-subject LoRA pool described in Sec.~\ref{subsec:rq1}. To assess compositional capabilities, we programmatically generate 100 distinct prompts with task counts K uniformly distributed across $\{2, 3, 4\}$, ensuring a systematic evaluation of multi-task performance. We evaluate SSR against the same baselines used in RQ1.

\textbf{Metrics.} We employ a detection-based pipeline using Grounding DINO~\cite{liu2024grounding} to evaluate multi-subject generation. Specifically, we query Grounding DINO for the $K$ target subjects and select the bounding box with the highest confidence score for each subject. Subsequently, we compute CLIP and DINOv2 similarity between the cropped regions and ground truth references. To strictly penalize instruction neglect, undetected objects are assigned a similarity score of zero. Finally, we report the Success Rate, defined as the percentage of samples where all $K$ target objects are successfully detected.

\begin{table}[t]
    \centering
    \caption{Quantitative comparison of simultaneous multi-task generation performance. We report DINOv2 and CLIP scores to assess object fidelity, and Success Rate to evaluate instruction adherence (i.e., successfully detecting all $K$ requested subjects).}
    \label{tab:exp2}
    
    \begin{tabular}{l c c c}
        \toprule
        \textbf{Method} & \textbf{DINO} $\uparrow$ & \textbf{CLIP} $\uparrow$ & \textbf{Success Rate} $\uparrow$ \\
        \midrule
        
        Average & 0.3971 & 0.6387 & 0.74 \\
        Task Arithmetic & 0.4052 & 0.6455 & 0.76 \\
        TIES & 0.4475 & 0.6498 & 0.69 \\
        DARE & 0.5050 & 0.6485 & 0.62 \\
        
        \rowcolor{gray!20}
        \textbf{SSR (Ours)} & \textbf{0.5704} & \textbf{0.7357} & \textbf{0.91} \\
        
        \bottomrule
    \end{tabular}
\end{table}

\textbf{Quantitative Results.} As detailed in Table~\ref{tab:exp2}, SSR establishes a new state-of-the-art in multi-task execution. Regarding generation fidelity, SSR achieves a DINOv2 score of 0.5704 and a CLIP score of 0.7357. Compared to the strongest sparse baseline DARE, SSR yields an absolute improvement of 6.54\% in DINOv2 and 8.72\% in CLIP. More importantly, SSR demonstrates exceptional robustness in instruction adherence with a 91\% Success Rate. Baselines like TIES and DARE rely on parameter sparsification to mitigate conflicts; while this maintains reasonable fidelity, it leads to significant task loss, dropping Success Rates to 69\% and 62\%. Consequently, SSR avoids this suppression and outperforms DARE by 29\% in this metric.

\begin{figure}[t]
    \centering
    \includegraphics[width=1\linewidth]{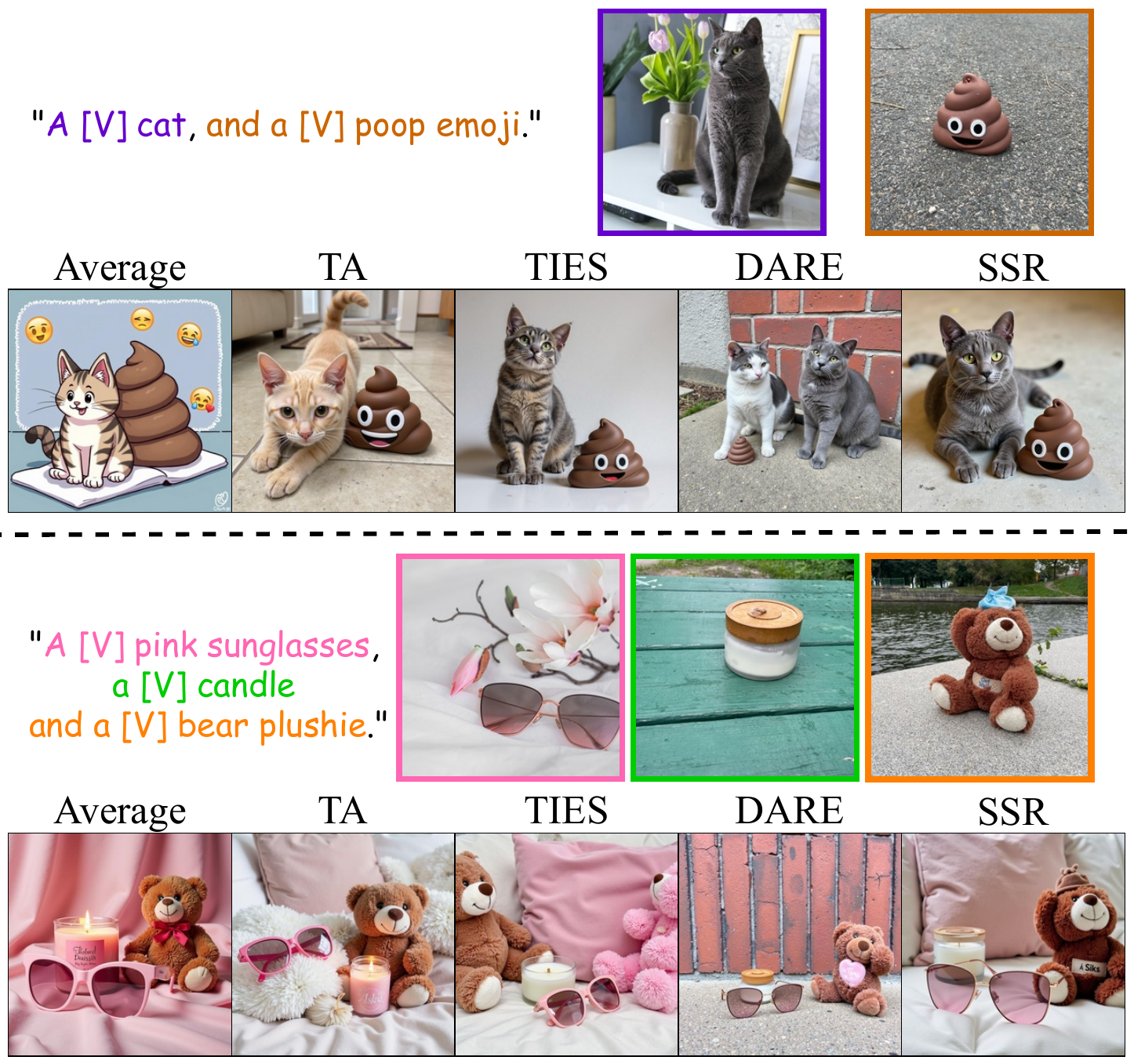}
    \caption{Visual comparison of simultaneous multi-task execution. The figure presents two experimental settings with different task counts: $K=2$ (top panel) and $K=3$ (bottom panel). For each setting, the composite prompt and the corresponding ground truth reference images are displayed above, followed by the generated results from different merging methods.}
\label{fig:exp2}
\end{figure}

\textbf{Qualitative Results.} As shown in Figure~\ref{fig:exp2}, baselines exhibit characteristic failure modes under multi-task interference. In the dual-task case (top row), Linear Average suffers from style collapse, reverting to a cartoon domain. Task Arithmetic and TIES lose subject identity, generating generic orange or tabby cats instead of the specific grey target. DARE exhibits severe structural instability, erroneously hallucinating a second cat. In the three-object scenario (bottom row), while baselines manage to generate the object classes, they fail to preserve fine-grained visual traits; for instance, the specific features of the target bear are homogenized into a generic plushie. In contrast, SSR accurately reconstructs all requested subjects with their unique high-fidelity details and correct spatial arrangement.

\subsection{RQ3: Generalization to Image Editing Tasks}
\label{subsec:rq3}

\textbf{Objective and Protocol.} This section investigates the third research question: \textit{Is the proposed method effective across different types of diffusion tasks, such as image editing?} To answer this, we design a dense multi-attribute editing benchmark focusing on precision facial retouching. We select three distinct editing tasks—lipstick, blush, and eyeshadow application—to evaluate the model's ability to resolve fine-grained parameter conflicts. The experiments are conducted on the FFHQ dataset~\cite{karras2019style}, which is partitioned into 400 images for training and 100 diverse images for testing.

\textbf{Implementation.} We employ commercial-grade image processing software to construct the training data. For each image in the training split, we synthesize three distinct target images, each corresponding to a single editing instruction. We then train three independent task-specific LoRAs on these source-target pairs. During the inference phase, these three LoRAs are merged to apply all makeup effects simultaneously on the test set. To establish a rigorous ground truth for evaluation, we apply the three editing instructions sequentially to the test images using the same commercial software. This serial execution produces high-quality compositional results, serving as the reference upper bound for evaluating the parallel merging performance. Consistent with the previous sections, we benchmark SSR against the same set of baselines used in RQ1.

\textbf{Metrics.} Quantitative performance is assessed using two key metrics: ArcFace similarity~\cite{deng2019arcface} is computed to verify identity preservation, ensuring the subject's facial features remain unaltered, while CLIP scores are calculated to measure the semantic alignment between the merged output and the sequentially edited ground truth.

\begin{table}[t]
    \centering
    \caption{Quantitative comparison on the facial editing benchmark. The table reports the average ArcFace similarity and CLIP scores across the test set for SSR and baseline methods.}
    \label{tab:exp3}
    
    \begin{tabular}{l c c}
        \toprule
        \textbf{Method} & \textbf{ArcFace Score} $\uparrow$ & \textbf{CLIP Score} $\uparrow$ \\
        \midrule
        
        Task Arithmetic & 0.9089 & 0.9348 \\
        TIES-Merging & 0.9430 & 0.9529 \\
        DARE & 0.9471 & 0.9464 \\
        
        \rowcolor{gray!20}
        \textbf{SSR (Ours)} & \textbf{0.9610} & \textbf{0.9625} \\
        
        \bottomrule
    \end{tabular}
\end{table}

\textbf{Quantitative Results.} As presented in Table~\ref{tab:exp3}, SSR consistently outperforms all baselines in both identity preservation and editing fidelity. Specifically, our method improves the ArcFace score by 1.39\% compared to the strongest baseline DARE. Furthermore, it surpasses TIES-Merging by 0.96\% in CLIP score, confirming that SSR effectively resolves parameter conflicts in dense editing tasks without compromising subject identity.

\begin{figure}[t]
    \centering
    \includegraphics[width=0.8\linewidth]{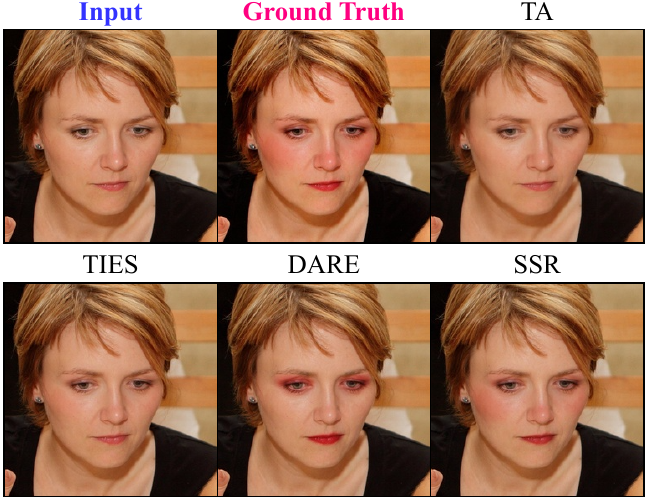}
    \caption{Visual comparison on the facial editing benchmark. We apply three makeup attributes (lipstick, blush, and eyeshadow) simultaneously. The figure displays the source image, the sequentially edited ground truth, and the generation results from different merging methods.}
    \label{fig:exp3}
\end{figure}

\textbf{Qualitative Results.} 
The visual comparison in Figure~\ref{fig:exp3} further validates these findings. Task Arithmetic fails to inject the editing concepts, producing an output almost identical to the source. TIES-Merging exhibits very weak editing effects, resulting in washed-out colors. DARE suffers from severe attribute imbalance due to its sparsification and rescaling mechanism; it tends to over-amplify certain features (e.g., lipstick) while erroneously masking others. In contrast, SSR faithfully reconstructs all three target attributes—lipstick, blush, and eyeshadow—with balanced intensity and precise localization, closely matching the serial execution ground truth.
\section{Conclusion}

We introduce SSR, which recasts LoRA merging as signal routing in a unified low-rank subspace rather than parameter-space arithmetic, with a closed-form router that is provably optimal in the least-squares sense. Experiments confirm that SSR consistently outperforms state-of-the-art baselines.

\section*{Limitations}
SSR optimizes a local linear reconstruction objective, which does not theoretically guarantee global optimality in the full nonlinear diffusion process. Although our experiments show that this local optimum closely approximates the upper bound, the gap may widen under more extreme conditions. Additionally, when merging tasks with severe domain conflicts or high semantic overlap, stronger parameter interference makes routing more challenging, and performance may degrade. Finally, the ability to compose multiple concepts with high fidelity could potentially be misused for generating deceptive content, and we encourage responsible use of such techniques.

\section*{Acknowledgements}
This work was supported in part by the Shanghai Municipal Commission of Economy and Informatization, under Grant No.~2024-GZL-RGZN-01008. This work was also supported in part by the National Natural Science Foundation of China, under Grant Nos.~62192783, 62276128, and 62406140; the Young Elite Scientists Sponsorship Program by China Association for Science and Technology, under Grant No.~2023QNRC001; the Key Research and Development Program of Jiangsu Province, under Grant No.~BE2023019; and the Jiangsu Natural Science Foundation, under Grant Nos.~BK20221441 and BK20241200. The authors would like to thank the support of Huawei Ascend Cloud Ecological Development Project.

\section*{Impact Statement}
This paper presents work whose goal is to advance the field of machine learning. There are many potential societal consequences of our work, none of which we feel must be specifically highlighted here.

\bibliography{main}
\bibliographystyle{icml2026}

\newpage
\appendix
\onecolumn
\section*{Appendix Table of Contents}
\begin{itemize}
    \item Section~\ref{app:protocol}: Detailed Calculation Protocol for Activation Alignment
    \item Section~\ref{app:details}: Dataset and Baseline Details
    \item Section~\ref{app:scaling_realworld}: Scalability and Real-World Composition
    \item Section~\ref{app:glue}: Generalization Beyond Diffusion
    \item Section~\ref{app:regmean_analysis}: Analysis of RegMean in High-Dimensional Diffusion Settings
    \item Section~\ref{app:viz_qwen}: Additional Results on Qwen-Image
    \item Section~\ref{app:ablation_calib}: Calibration Robustness
    \item Section~\ref{app:error_bound}: Theoretical Analysis of Finite-Sample Error
\end{itemize}

\section{Detailed Calculation Protocol for Activation Alignment (Fig.~\ref{fig:heatmap})}
\label{app:protocol}

To rigorously quantify parameter interference (Figure~\ref{fig:heatmap}), we visualize the activation alignment between task-specific prompts and their corresponding LoRA modules.

\subsection{Experimental Setup and Task Definitions}
We selected 10 distinct concepts from the DreamBooth dataset to represent diverse tasks ($T_1 \dots T_{10}$). For each task, a dedicated LoRA module ($L_1 \dots L_{10}$) was trained on FLUX.1-dev. The target concepts include: \textit{Bear Plushie, Candle, Cat, Colorful Sneaker, Dog, Pink Sunglasses, Poop Emoji, RC Car, Red Cartoon,} and \textit{Teapot}. Each task is triggered by a specific prompt (e.g., ``a sks [class]'').

\subsection{Measurement Methodology}
To ensure a fair and mathematically rigorous comparison, we adopt a unified definition of contribution. For both the baseline and our method, the total model update $\Delta h$ is linearly decomposable into task-specific components: $\Delta h = \sum_{k} h_k$. We quantify the Activation Energy for module $L_k$ under prompt $T_i$ by measuring the magnitude of its specific update vector $h_k$ before summation:
\begin{equation}
    \text{Energy}(T_i, L_k) = \frac{1}{M} \sum_{m=1}^M \| h_k^{(m)} \|_2.
\end{equation}
Although the mechanisms for generating $h_k$ differ, they represent the equivalent additive term in the final output equation:

\begin{itemize}
    \item \textbf{Static Baseline (DARE):} We employ DARE with a sparsity rate of $p=0.9$. To strictly replicate the physical merge where weights are summed ($\Delta W = \sum \lambda_k \tilde{W}_k$), we compute the contribution of the $k$-th branch as $h_k = \lambda_k B_k (M_k \odot A_k) x$. Here, $\lambda_k = 10$ is the rescaling factor. The sum of these individual $h_k$ vectors mathematically equals the output of the actual merged weight.
    
    \item \textbf{SSR (Ours):} The contribution is derived from the routed subspace signal. The input is first projected into the unified space $z = \mathbf{A}_{\text{comb}} x$ and transformed by the router $s = R z$. We then slice the decorrelated signal $s$ to extract the component corresponding to task $k$. The specific contribution is $h_k = B_k s_{[k]}$, where $s_{[k]}$ denotes the signal slice precisely directed to the up-projection $B_k$. Since $\Delta h = \mathbf{B}_{\text{comb}} s = \sum B_k s_{[k]}$, this definition is structurally equivalent to the baseline's additive form.
\end{itemize}

\subsection{Visualization Protocol}
To normalize varying feature magnitudes, we apply row-wise max-normalization: $\hat{E}_{i,k} = E_{i,k} / \max_j (E_{i,j})$.
In the heatmap, diagonal elements ($\hat{E}_{i,i}$) ideally equal $1.0$. High off-diagonal values in the baseline indicate severe \textit{crosstalk}, where unrelated modules are spuriously activated by task-specific instructions.

\section{Dataset and Baseline Details}
\label{app:details}

\begin{figure}[h]
    \centering
    \includegraphics[width=0.6\linewidth]{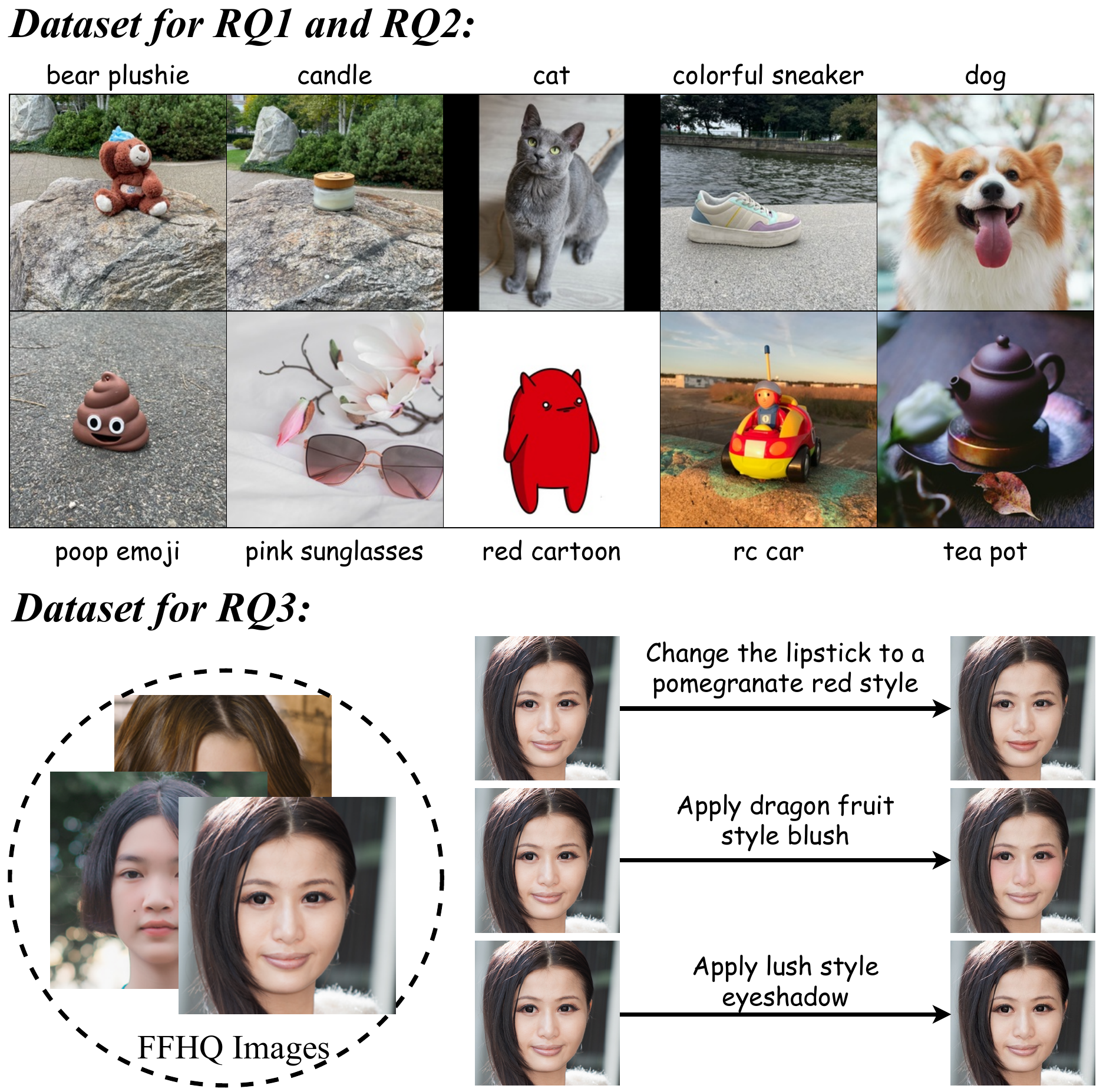}
    \caption{\textbf{Overview of the datasets used in our experiments.} The top panel displays the 10 custom subjects curated for the single-task (RQ1) and multi-task (RQ2) benchmarks. The bottom panel illustrates the three specific facial editing instructions designed for the RQ3 benchmark on the FFHQ dataset.}
    \label{fig:datasets}
\end{figure}

\subsection{Dataset Construction}
\label{subsec:dataset}

\textbf{Single-Task \& Multi-Task Benchmark (RQ1 \& RQ2).} 
We curate a dataset of 10 distinct subjects to evaluate concept preservation and composition. As illustrated in the top panel of Figure~\ref{fig:datasets}, the subjects include: \textit{bear plushie, candle, cat, colorful sneaker, dog, poop emoji, pink sunglasses, red cartoon, rc car,} and \textit{tea pot}. These subjects were selected to cover a diverse range of categories (animals, toys, objects) and texture complexities.

\textbf{Facial Editing Benchmark (RQ3).} 
For fine-grained editing tasks, we utilize the FFHQ dataset~\cite{karras2019style}. We define three specific editing instructions as shown in the bottom panel of Figure~\ref{fig:datasets}:
\begin{itemize}
    \item \textbf{Lipstick:} ``Change the lipstick to a pomegranate red style''
    \item \textbf{Blush:} ``Apply dragon fruit style blush''
    \item \textbf{Eyeshadow:} ``Apply lush style eyeshadow''
\end{itemize}

\subsection{Baselines}
\label{subsec:baselines}

We benchmark our method against established parameter merging paradigms. A critical implementation detail in our experiments is that we perform all merging operations on the reconstructed weight updates $\Delta W = B A \in \mathbb{R}^{d \times d}$, rather than on the individual low-rank factors $A$ and $B$. We empirically observed that merging factorized matrices directly leads to performance degradation, as the latent bases of independently trained LoRAs are not spatially aligned.

Therefore, for a set of $K$ tasks, we define the task vector $\tau_k$ for the $k$-th task as its LoRA update:
\begin{equation}
    \tau_k = \Delta W_k = B_k A_k.
\end{equation}

\textbf{Linear Average.}
A naive approach to multi-task merging is to average the parameters of all models. This assumes that the optimal solution lies at the centroid of the task-specific solutions:
\begin{equation}
    \Delta W_{\text{Avg}} = \frac{1}{K} \sum_{k=1}^K \tau_k.
\end{equation}
While simple, averaging often leads to ``concept dilution,'' where the magnitude of task-specific features is reduced by a factor of $K$, diminishing the model's ability to trigger specific concepts.

\textbf{Task Arithmetic.}
Task Arithmetic~\cite{ilharco2022editing} treats model editing as vector operations in the weight space. It computes the unified update by summing the task vectors, often controlled by a global scaling factor $\lambda$:
\begin{equation}
    \Delta W_{\text{TA}} = \lambda \sum_{k=1}^K \tau_k.
\end{equation}
\textbf{Relation to SSR.}
Ideally, if we set the routing matrix in our framework to the identity matrix, i.e., $R = \mathbf{I}_{Kr} \in \mathbb{R}^{Kr \times Kr}$, the output of our model becomes:
\begin{equation}
    \Delta W_{\text{SSR}} = \mathbf{B}_{\text{comb}} \mathbf{I} \mathbf{A}_{\text{comb}} = \begin{bmatrix} B_1 & \dots & B_K \end{bmatrix} \begin{bmatrix} A_1 \\ \vdots \\ A_K \end{bmatrix} = \sum_{k=1}^K B_k A_k.
\end{equation}
This derivation reveals that \textbf{Task Arithmetic (with $\lambda=1$) is mathematically equivalent to a special case of SSR where $R=\mathbf{I}$}. It corresponds to a naive concatenation strategy in the rank dimension without any cross-task signal regulation. This explains why Task Arithmetic is susceptible to destructive interference: it blindly superimposes signal directions without orthogonalizing them.

\textbf{TIES-Merging.}
TIES-Merging~\cite{yadav2023ties} addresses interference by resolving sign conflicts and pruning redundant parameters. It operates on the flattened task vectors through a three-step pipeline: Trim, Elect Sign, and Merge.
\begin{equation}
    \Delta W_{\text{TIES}} = \text{Mean} \left( \text{SignSelect} \left( \text{Top-}k \left( \{\tau_1, \dots, \tau_K\} \right) \right) \right).
\end{equation}
By keeping only the most significant parameters and enforcing direction consensus, TIES aims to reduce the ``noise'' introduced by conflicting updates.

\textbf{DARE (Drop And REscale).}
DARE~\cite{yu2024language} employs a stochastic approach to sparsification. It randomly drops parameters from each task vector $\tau_k$ with a probability $1-p$ and rescales the remaining parameters to maintain the magnitude expectation:
\begin{equation}
    \Delta W_{\text{DARE}} = \sum_{k=1}^K \left( \frac{M_k \odot \tau_k}{p} \right), \quad M_k \sim \text{Bernoulli}(p).
\end{equation}
This method relies on the hypothesis that the delta weights are highly redundant and that random sparsification can preserve the core function of the model while vacating space for other tasks.

\textbf{RobustMerge.}
RobustMerge~\cite{zeng2025robustmerge} is a training-free merging method tailored to parameter-efficient modules. Its central observation is that, under low-rank decomposition, the dominant singular directions of $\tau_k$ are sensitive to interference, so preserving the \emph{direction} of each task vector is more important than preserving its magnitude. To this end, RobustMerge (i) prunes parameters and rescales coefficients using inter-parameter relations on the singular values, stabilizing the principal directions away from task interference, and (ii) applies a cross-task normalization step to balance the contributions of different tasks. Compared with sign- or sparsity-based heuristics such as TIES and DARE, RobustMerge focuses explicitly on direction robustness in the low-rank subspace.

\textbf{IterIS.}
IterIS~\cite{chen2025iteris} formulates LoRA merging as a layer-wise optimization problem and refines its solution through an iterative inference-solving loop. Given a small set of unlabeled calibration samples, IterIS alternates between (i) running the current merged model to obtain updated layer-wise activations, and (ii) solving a regularized least-squares problem to minimize the discrepancy between the merged and per-task outputs. An adaptive task weighting is further introduced to mitigate imbalance across tasks. Compared with prior optimization-based merging baselines, IterIS reduces the required number of calibration samples to about 1--5\% while converging in a few layer-wise iterations, making it a strong optimization-based competitor for LoRA merging.

\section{Scalability and Real-World Composition}
\label{app:scaling_realworld}

\textbf{Scaling to larger merge counts.}
To further evaluate scalability, we extend the FLUX.1-dev single-task preservation benchmark to larger merge counts up to $K=21$. As shown in Table~\ref{tab:scaling_k21}, SSR maintains strong recovery rates as the number of merged LoRAs increases, although parameter interference naturally becomes more severe at larger $K$.

\begin{table}[t]
    \centering
    \caption{Single-task preservation of SSR on FLUX.1-dev when scaling to larger merge counts.}
    \label{tab:scaling_k21}
    \resizebox{\textwidth}{!}{
    \begin{tabular}{l cc cc cc cc cc}
        \toprule
        \multirow{2}{*}{Method} & \multicolumn{2}{c}{$K=9$} & \multicolumn{2}{c}{$K=12$} & \multicolumn{2}{c}{$K=15$} & \multicolumn{2}{c}{$K=18$} & \multicolumn{2}{c}{$K=21$} \\
        \cmidrule(lr){2-3} \cmidrule(lr){4-5} \cmidrule(lr){6-7} \cmidrule(lr){8-9} \cmidrule(lr){10-11}
        & DINO & CLIP & DINO & CLIP & DINO & CLIP & DINO & CLIP & DINO & CLIP \\
        \midrule
        Upper Bound & 0.744 & 0.845 & 0.744 & 0.845 & 0.744 & 0.845 & 0.744 & 0.845 & 0.744 & 0.845 \\
        SSR & 0.671 & 0.785 & 0.652 & 0.774 & 0.630 & 0.751 & 0.604 & 0.737 & 0.573 & 0.714 \\
        Recovery Rate & 90.2\% & 92.9\% & 87.6\% & 91.6\% & 84.7\% & 88.9\% & 81.2\% & 87.2\% & 77.0\% & 84.5\% \\
        \bottomrule
    \end{tabular}}
\end{table}

\textbf{Real-world community LoRA composition.}
We also evaluate a practical composition setting using three community LoRAs downloaded from the Liblib platform, covering lighting, portrait beautification, and portrait stylization. We use serial execution of the three LoRAs as the reference and report CLIP similarity in Table~\ref{tab:community_lora}. SSR achieves the highest score among all evaluated merging methods.

\begin{table}[t]
    \centering
    \caption{Real-world community LoRA composition results.}
    \label{tab:community_lora}
    \begin{tabular}{lc}
        \toprule
        Method & CLIP Score \\
        \midrule
        Average & 0.652 \\
        Task Arithmetic & 0.684 \\
        TIES & 0.735 \\
        DARE & 0.712 \\
        RobustMerge & 0.768 \\
        \textbf{SSR (Ours)} & \textbf{0.821} \\
        \bottomrule
    \end{tabular}
\end{table}

\section{Generalization Beyond Diffusion}
\label{app:glue}

Although our main focus is diffusion models, we further validate SSR on non-generative downstream tasks using the GLUE benchmark. Following the DOGE TA setting~\cite{wei2025modeling}, we compare SSR with representative model-merging baselines across eight GLUE tasks, including Concrete TA~\cite{tang2023concrete}. As shown in Table~\ref{tab:glue}, SSR achieves the highest average score, indicating that the routing-based merging mechanism can also transfer beyond diffusion-based synthesis.

\begin{table}[t]
    \centering
    \caption{Generalization to non-generative downstream tasks on GLUE.}
    \label{tab:glue}
    \begin{tabular}{lccccccccc}
        \toprule
        Method & CoLA & MNLI & MRPC & QNLI & QQP & RTE & SST2 & STSB & Avg. \\
        \midrule
        Weight Averaging & 69.7 & 59.7 & 78.9 & 90.1 & 83.8 & 80.5 & 91.2 & 72.0 & 78.2 \\
        Task Arithmetic & 68.8 & 55.2 & 78.7 & 89.8 & 83.7 & 79.1 & 91.5 & 72.4 & 77.4 \\
        Ties Merging & 68.3 & 56.3 & 79.4 & 89.8 & 83.7 & 79.4 & 91.6 & 71.2 & 77.5 \\
        Concrete TA & 69.1 & 58.1 & 78.4 & 89.9 & 83.5 & 79.4 & 91.6 & 73.4 & 78.0 \\
        DOGE TA & 69.1 & 71.9 & 80.9 & 90.3 & 83.5 & 79.8 & 92.5 & 71.1 & 79.9 \\
        \textbf{SSR} & \textbf{69.3} & \textbf{73.3} & \textbf{82.1} & 90.1 & 83.6 & \textbf{81.4} & \textbf{92.7} & \textbf{74.9} & \textbf{80.9} \\
        \bottomrule
    \end{tabular}
\end{table}

\section{Analysis of RegMean in High-Dimensional Diffusion Settings}
\label{app:regmean_analysis}

In this section, we provide a detailed analysis of the RegMean baseline~\cite{jin2022dataless}. While RegMean offers a theoretically closed-form solution for weight merging, its application to high-dimensional Diffusion Transformers (such as FLUX.1) presents fundamental difficulties regarding numerical stability and computational feasibility.

\textbf{Mathematical Formulation.}
RegMean minimizes the $L_2$ distance between the activations of the merged model and the individual models. The optimal merged weight $W_M$ is calculated as:
\begin{equation}
    \label{eq:regmean_form}
    W_M = \left( \sum_{i \in \mathcal{K}} X_i^T X_i \right)^{-1} \sum_{i \in \mathcal{K}} \left( X_i^T X_i W_i \right),
\end{equation}
where $X_i \in \mathbb{R}^{N \times d}$ represents the input feature matrix for the $i$-th task, and $W_i$ is the corresponding task-specific weight. The term $\sum X_i^T X_i$ represents the global covariance (Gram) matrix of the input activations.

\textbf{Inherent Singularity and Contrast with SSR.}
The fundamental limitation of RegMean in this setting lies in the dimensionality of the inversion problem.
\begin{itemize}
    \item \textbf{RegMean (Global Space):} Requires inverting the global covariance matrix of size $d \times d$. In FLUX.1, $d$ is substantial (e.g., $d \approx 12,288$). Under a one-shot calibration setting ($N \approx 4,096$), we face a regime where $N \ll d$. This renders the matrix inherently rank-deficient and mathematically singular, making direct inversion impossible.
    \item \textbf{SSR (Subspace):} In contrast, our method projects signals into a compact subspace of rank $Kr$ (e.g., $Kr \approx 96$) \textit{before} computing statistics. Since $N \gg Kr$, the resulting subspace correlation matrix is well-conditioned and naturally invertible without requiring aggressive regularization.
\end{itemize}

\begin{figure}[h]
    \centering
    \includegraphics[width=0.8\linewidth]{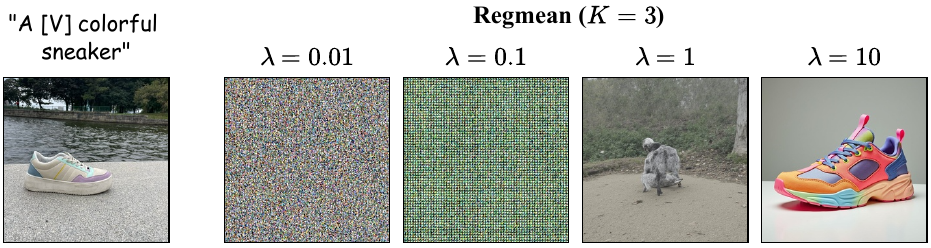}
    \caption{\textbf{Visual analysis of RegMean stability.} Since the RegMean covariance matrix is singular ($d \gg N$), we introduce regularization $\lambda \mathbf{I}$ to enable inversion. However, the results reveal a severe trade-off due to the ill-conditioned nature of the matrix: small $\lambda$ fails to suppress numerical explosion (noise), while large $\lambda$ dominates the signal, erasing task-specific features (identity loss).}
    \label{fig:regmean_viz}
\end{figure}

\textbf{The Regularization Dilemma.}
Since the covariance matrix is singular, introducing a Tikhonov regularization term $\lambda \mathbf{I}$ is mathematical requisite to obtain a computable solution for RegMean:
\begin{equation}
    W_M \approx \left( \sum_{i \in \mathcal{K}} X_i^T X_i + \lambda \mathbf{I} \right)^{-1} \sum_{i \in \mathcal{K}} \left( X_i^T X_i W_i \right).
\end{equation}
However, we find that due to the severe ill-conditioning of the high-dimensional covariance matrix, this workaround fails to yield a valid operating point (as shown in Figure~\ref{fig:regmean_viz}):
\begin{itemize}
    \item \textbf{Numerical Instability ($\lambda \leq 0.1$):} When $\lambda$ is small, it is insufficient to correct the condition number. The inversion is dominated by numerical errors from near-zero eigenvalues, resulting in chaotic, high-frequency artifacts (Columns 1-2).
    \item \textbf{Identity Dilution ($\lambda \geq 1$):} Increasing $\lambda$ makes the matrix numerically invertible, but the regularization term begins to dominate the covariance sum. This effectively ``washes out" the task-specific correlations ($X_i^T X_i$), leading to the loss of subject identity (Columns 3-4).
\end{itemize}

\section{Additional Results on Qwen-Image}
\label{app:viz_qwen}

To further validate the architectural universality of our method, we provide additional quantitative and qualitative comparisons using the Qwen-Image backbone~\cite{wu2025qwenimagetechnicalreport}. Qwen-Image exhibits different feature space characteristics from FLUX.1; nevertheless, the interference patterns of standard merging methods remain consistent.

\begin{table*}[ht]
    \centering
    \caption{Quantitative evaluation of single-task capability preservation on Qwen-Image under multi-task interference. We report the average DINOv2 and CLIP scores across varying numbers of merged tasks ($K$). The Upper Bound (shown in gray) represents the performance of a standalone LoRA without merging.}
    \label{tab:exp1_qwen}
    \setlength{\tabcolsep}{8pt}
    \begin{tabular}{l cc cc cc cc}
        \toprule
        \multirow{2.5}{*}{\textbf{Method}} & \multicolumn{2}{c}{\textbf{K=3}} & \multicolumn{2}{c}{\textbf{K=5}} & \multicolumn{2}{c}{\textbf{K=7}} & \multicolumn{2}{c}{\textbf{K=9}} \\
        \cmidrule(lr){2-3} \cmidrule(lr){4-5} \cmidrule(lr){6-7} \cmidrule(lr){8-9}
        & DINO & CLIP & DINO & CLIP & DINO & CLIP & DINO & CLIP \\
        \midrule
        Average & 0.5443 & 0.7624 & 0.4899 & 0.7221 & 0.4531 & 0.7103 & 0.4467 & 0.6913 \\
        Task Arithmetic & 0.6894 & 0.8036 & 0.5861 & 0.7872 & 0.5822 & 0.7816 & 0.4772 & 0.7453 \\
        TIES & 0.7455 & 0.8427 & 0.6292 & 0.8205 & 0.5971 & 0.8147 & 0.5716 & 0.7902 \\
        DARE & 0.7308 & 0.8372 & 0.5996 & 0.8109 & 0.5975 & 0.7975 & 0.5557 & 0.7768 \\
        \rowcolor{gray!20}
        \textbf{SSR (Ours)} & \textbf{0.7626} & \textbf{0.8761} & \textbf{0.7571} & \textbf{0.8746} & \textbf{0.7540} & \textbf{0.8673} & \textbf{0.7461} & \textbf{0.8654} \\
        \rowcolor{gray!20}
        \textit{Recovery Rate} & 99.2\% & 99.7\% & 98.4\% & 99.5\% & 98.0\% & 98.7\% & 97.0\% & 98.5\% \\
        \cmidrule(lr){1-9}
        \textcolor{gray}{\textit{Upper Bound}} & \textcolor{gray}{0.7691} & \textcolor{gray}{0.8786} & \textcolor{gray}{0.7691} & \textcolor{gray}{0.8786} & \textcolor{gray}{0.7691} & \textcolor{gray}{0.8786} & \textcolor{gray}{0.7691} & \textcolor{gray}{0.8786} \\
        \bottomrule
    \end{tabular}
\end{table*}

\textbf{Quantitative Results.} Table~\ref{tab:exp1_qwen} reports the evaluation results on Qwen-Image under the same variable-scale merging protocol as in the main paper. In the high-interference setting ($K=9$), SSR surpasses the strongest baseline TIES by 17.45\% in DINOv2 similarity. Moreover, baselines degrade rapidly as task counts increase: the TIES score drops by 17.39\% from $K=3$ to $K=9$, whereas SSR exhibits a decline of only 1.65\% under identical conditions. In terms of fidelity retention, SSR consistently recovers over 97.0\% of the single-task oracle performance on Qwen-Image.

\begin{figure}[h]
    \centering
    \includegraphics[width=1.0\linewidth]{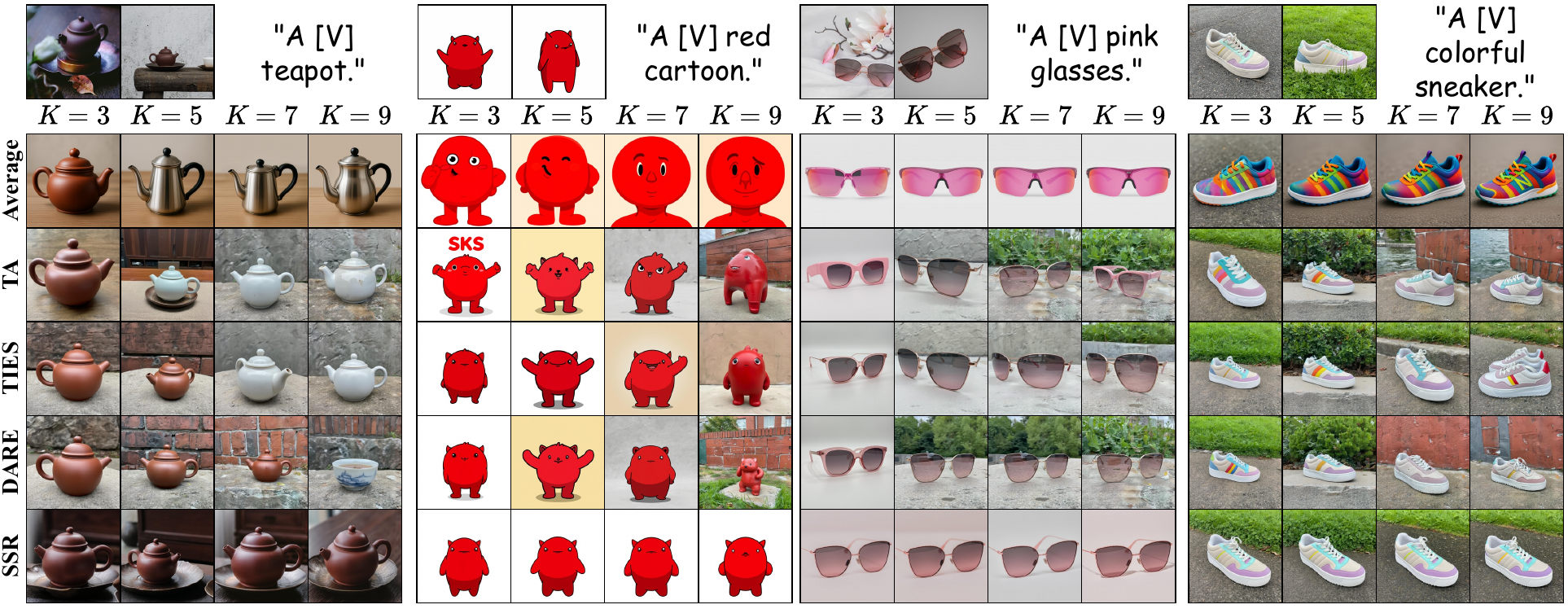}
    \caption{\textbf{Visual comparison of single-task preservation on Qwen-Image .} We illustrate the generation quality of the target subject mixed with increasing numbers of distractor LoRAs ($K \in \{3, 5, 7, 9\}$). While baseline methods (e.g., Task Arithmetic, DARE) exhibit severe semantic collapse and identity loss—often generating unrecognizable noise or generic concepts under high interference ($N=9$)—SSR consistently preserves the structural and textural details of the subject. This confirms that the subspace routing mechanism is robust across different transformer-based diffusion architectures.}
    \label{fig:viz_qwen}
\end{figure}

\textbf{Qualitative Results.} Figure~\ref{fig:viz_qwen} presents the visual results. Consistent with our findings on FLUX.1, dense merging methods (Linear Average, Task Arithmetic) suffer from rapid concept dilution. As the number of tasks increases to $K=7$ and $K=9$, these methods fail to retain the target subject's defining traits, resulting in generic or corrupted outputs. Sparse methods like TIES and DARE, while occasionally preserving outlines, struggle with texture consistency and often introduce high-frequency artifacts or erroneous attribute bindings due to the mismatch in the Qwen-Image feature space. 

In sharp contrast, SSR demonstrates exceptional stability. Even in the most challenging regime ($K=9$), SSR successfully disentangles the target signal from the interference, generating images that are semantically aligned with the single-task Oracle. This visual evidence corroborates the quantitative results reported in Table~\ref{tab:exp1_qwen}, where SSR surpasses the strongest baseline by over 34\% in DINOv2 similarity on the Qwen-Image.

\section{Calibration Robustness}
\label{app:ablation_calib}

\subsection{Ablation Study on Calibration Steps}

\begin{figure}[!h]
    \centering
    \includegraphics[width=0.6\linewidth]{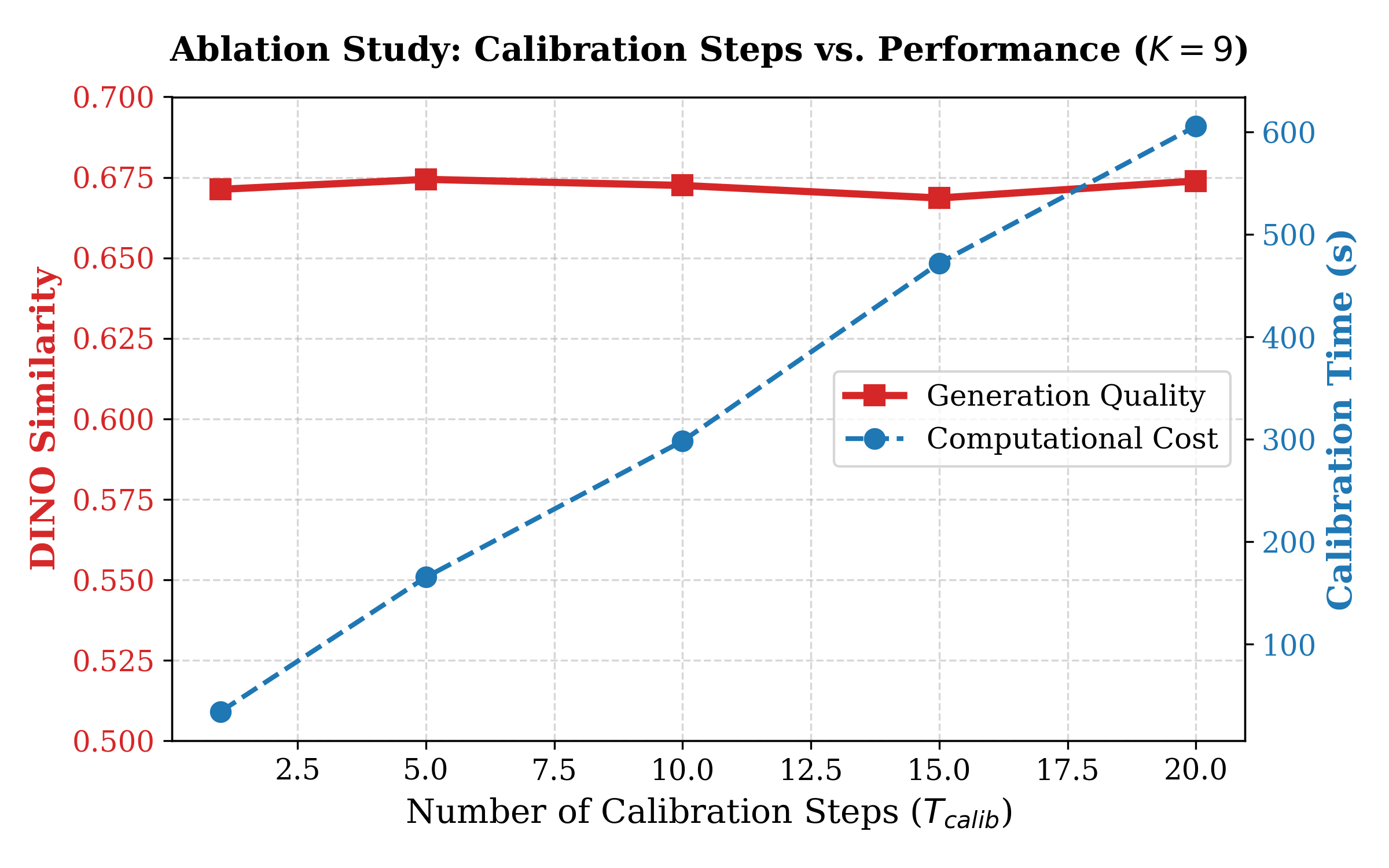}
    \caption{\textbf{Impact of calibration steps on performance and cost ($K=9$).} The red solid line indicates the generation fidelity (DINO Similarity), while the blue dashed line represents the calibration time cost. We observe that model performance saturates immediately at $T=1$, whereas the computational overhead increases linearly. This justifies our choice of one-shot calibration as the optimal efficiency-performance trade-off.}
    \label{fig:ablation_steps}
\end{figure}

To validate the efficiency of our one-shot calibration strategy, we conduct an ablation study on the number of calibration steps ($T_{calib}$) required to estimate the subspace routing matrix. We evaluate the performance on the FLUX.1 backbone under the most computationally demanding setting with $K=9$ tasks, varying $T_{calib} \in \{1, 5, 10, 15, 20\}$.

\textbf{Empirical Observations.} 
As illustrated in Figure~\ref{fig:ablation_steps}, increasing the number of calibration steps yields negligible gains in generation quality. The DINO similarity score remains statistically stable within a narrow range (e.g., 0.6713 at $T=1$ versus 0.6739 at $T=20$), indicating that the subspace statistics converge rapidly. In contrast, the time cost scales linearly with $T_{calib}$. Since SSR requires performing inference for each of the $K$ tasks during the calibration phase, a 20-step calibration for 9 tasks incurs a substantial overhead ($>$600s), whereas our proposed \textbf{one-shot calibration} ($T=1$) completes in just 34.26 seconds. Consequently, $T=1$ provides the most efficient solution without compromising fidelity.

\textbf{Mechanism Analysis: Decoupling Routing from Generation.} 
The sufficiency of one-shot calibration stems from a fundamental design principle: SSR decouples the temporal generation capability from the signal conflict resolution.

The complex, time-dependent generative dynamics are entirely preserved within the original LoRA matrices (conceptually $A$ and $B$). Since these matrices are already trained to handle inputs across all diffusion timesteps, the merged model inherently inherits this temporal generalization. Consequently, our router $\mathbf{R}$ is not required to ``learn'' or approximate the specific operations for each timestep. Instead, its sole function is to redistribute the signal flow within the task subspace to ensure orthogonality. Because the geometric orientation of the task-specific subspaces remains structurally stable, a routing matrix $\mathbf{R}$ derived from a single representative timestep is sufficient to globally resolve parameter conflicts, without the need to track the entire diffusion trajectory.

\subsection{Prompt Template Stability}
\label{app:prompt_stability}

We further evaluate whether one-shot calibration is sensitive to prompt wording. At $K=3$, we test five calibration prompt templates: T1: \texttt{A high quality photo of a [V] [obj]}; T2: \texttt{A professional studio shot of a [V] [obj] against a neutral solid background}; T3: \texttt{A detailed artistic illustration depicting a [V] [obj]}; T4: \texttt{A [V] [obj] placed on a clean wooden surface in a well lit room}; and T5: \texttt{A [V] [obj] captured with dramatic cinematic lighting and high contrast}. As shown in Table~\ref{tab:prompt_stability}, the cross-template variance remains very small, indicating that the final merged model is stable to calibration prompt wording.

\begin{table}[t]
    \centering
    \caption{Prompt template stability of one-shot calibration at $K=3$.}
    \label{tab:prompt_stability}
    \begin{tabular}{lccccccc}
        \toprule
        Metric & T1 & T2 & T3 & T4 & T5 & Mean & Var \\
        \midrule
        Avg CLIP & 0.7915 & 0.8196 & 0.8160 & 0.8267 & 0.7983 & 0.8104 & 0.00067 \\
        Avg DINO & 0.7405 & 0.7283 & 0.7324 & 0.7287 & 0.7260 & 0.7312 & 0.00035 \\
        \bottomrule
    \end{tabular}
\end{table}

\subsection{Stability Across Randomness}
\label{app:random_stability}

We further evaluate the stability of SSR under different random distractor sets and random seeds. At $K=3$, we report the mean and standard deviation across five independent runs in Table~\ref{tab:random_stability}. The results remain stable under both sources of randomness. The worst-case drops are limited to 0.017 in DINO and 0.012 in CLIP, which indicates that the gains are not caused by a favorable distractor selection or seed choice.

\begin{table}[t]
    \centering
    \caption{Stability under different random distractor sets and random seeds at $K=3$.}
    \label{tab:random_stability}
    \begin{tabular}{lcccc}
        \toprule
        Condition & DINO Mean & DINO Std & CLIP Mean & CLIP Std \\
        \midrule
        Random Distractor Sets & 0.729 & 0.011 & 0.810 & 0.008 \\
        Random Seeds & 0.733 & 0.007 & 0.813 & 0.005 \\
        \bottomrule
    \end{tabular}
\end{table}

\section{Theoretical Analysis of Finite-Sample Error}
\label{app:error_bound}

In Section~\ref{subsec:routing}, we constructed the Subspace Signal Router $R$ using empirical second-order statistics ($\hat{\mathbf{G}}$ and $\hat{\mathbf{Q}}$) derived from a calibration set of size $N$. All experiments use the unregularized OLS form $R=\hat{\mathbf{Q}}\hat{\mathbf{G}}^{-1}$. In this section, we provide a rigorous bound on the estimation error under the same formulation. We prove that due to the low-dimensional nature of the LoRA subspace, our constructed router converges rapidly to the population optimal router $R^*$ as $N$ increases.

\subsection{Setup and Notations}

Let the unified subspace dimension be $D_{\text{sub}} = K \cdot r$. We define the router operates in this compact space $\mathbb{R}^{D_{\text{sub}} \times D_{\text{sub}}}$.
Let $\mathcal{D}$ denote the underlying distribution of the projected activations $z \in \mathbb{R}^{K r}$ and task-specific target signals $y \in \mathbb{R}^{K r}$.

The population optimal router (defined by expected statistics over the true distribution) is:
\begin{equation}
    R^* = \mathbf{Q}^* (\mathbf{G}^*)^{-1},
\end{equation}
where $\mathbf{G}^* = \mathbb{E}[z z^\top] \in \mathbb{R}^{Kr \times Kr}$ and $\mathbf{Q}^* = \mathbb{E}[y z^\top] \in \mathbb{R}^{Kr \times Kr}$.

The empirical router (constructed in our method using $N$ samples) is:
\begin{equation}
    \hat{R} = \hat{\mathbf{Q}} \hat{\mathbf{G}}^{-1},
\end{equation}
where $\hat{\mathbf{G}} = \frac{1}{N} \sum_{i=1}^N z_i z_i^\top$ and $\hat{\mathbf{Q}} = \frac{1}{N} \sum_{i=1}^N y_i z_i^\top$.

Our objective is to derive the upper bound for the estimation error $\| \hat{R} - R^* \|_2$.

\subsection{Main Theorem}

\begin{theorem}[Sample Complexity in Low-Rank Subspace]
\label{thm:ssr_bound}
Assume the feature vectors $z$ are sub-Gaussian with parameter $\sigma^2$ and bounded norm. Let the population correlation matrix be well-conditioned, i.e., $\lambda_{\min}(\mathbf{G}^*) \ge \mu > 0$.
For any $\delta \in (0, 1)$, provided the calibration sample size $N$ is sufficiently large, the difference between our constructed router $\hat{R}$ and the population optimum $R^*$ is bounded with probability at least $1 - \delta$ by:
\begin{equation}
    \| \hat{R} - R^* \|_2 \le \frac{C}{\mu} \left( 1 + \frac{1}{\mu} \right) \sqrt{\frac{Kr + \log(1/\delta)}{N}},
\end{equation}
where $C$ is a constant depending on the sub-Gaussian norms of the data.
\end{theorem}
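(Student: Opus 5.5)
The plan is to reduce the router error to two matrix concentration statements, one for $\hat{\mathbf{G}}$ and one for $\hat{\mathbf{Q}}$, and then combine them with a perturbation identity for the inverse. The starting point is the exact algebraic decomposition
\begin{equation}
    \hat{R} - R^* = (\hat{\mathbf{Q}} - \mathbf{Q}^*)\hat{\mathbf{G}}^{-1} + \mathbf{Q}^*\left(\hat{\mathbf{G}}^{-1} - (\mathbf{G}^*)^{-1}\right),
\end{equation}
together with $\hat{\mathbf{G}}^{-1} - (\mathbf{G}^*)^{-1} = \hat{\mathbf{G}}^{-1}(\mathbf{G}^* - \hat{\mathbf{G}})(\mathbf{G}^*)^{-1}$. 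Taking operator norms gives
\begin{equation}
    \|\hat{R}-R^*\|_2 \le \|\hat{\mathbf{G}}^{-1}\|_2 \left( \|\hat{\mathbf{Q}}-\mathbf{Q}^*\|_2 + \frac{\|\mathbf{Q}^*\|_2}{\mu}\|\hat{\mathbf{G}}-\mathbf{G}^*\|_2 \right).
\end{equation}
Here $\|\mathbf{Q}^*\|_2$ is bounded by a constant depending on the sub-Gaussian norms, since $y = \mathbf{E}_k A_k x$ is a fixed linear image of the same activations and is therefore also sub-Gaussian and bounded. Absorbing $\|\mathbf{Q}^*\|_2$ into $C$, this produces the advertised factor $\frac{1}{\mu}(1+\frac{1}{\mu})$, provided $\|\hat{\mathbf{G}}^{-1}\|_2 \le 2/\mu$.

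The second step is covariance concentration in dimension $Kr$. I would apply the standard sub-Gaussian covariance estimation bound (e.g. Vershynin's $\varepsilon$-net argument over the unit sphere of $\mathbb{R}^{Kr}$ with a $1/4$-net of size $9^{Kr}$, plus Bernstein for each fixed direction). This gives, with probability at least $1-\delta/2$,
\begin{equation}
    \|\hat{\mathbf{G}}-\mathbf{G}^*\|_2 \le C_1\sigma^2\left(\sqrt{\frac{Kr+\log(2/\delta)}{N}} + \frac{Kr+\log(2/\delta)}{N}\right).
\end{equation}
The "$N$ sufficiently large" hypothesis is used exactly here. Once $N \gtrsim (\sigma^4/\mu^2)(Kr+\log(1/\delta))$, the linear term is dominated by the square-root term, and the deviation is at most $\mu/2$. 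Weyl's inequality then yields $\lambda_{\min}(\hat{\mathbf{G}}) \ge \mu/2$, hence invertibility and $\|\hat{\mathbf{G}}^{-1}\|_2 \le 2/\mu$.

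The third step is the cross term $\hat{\mathbf{Q}}-\mathbf{Q}^*$, an average of the non-symmetric rank-one matrices $y_i z_i^\top - \mathbb{E}[yz^\top]$. I would bound it with a two-sided net argument: take $u,v$ ranging over nets of the unit sphere in $\mathbb{R}^{Kr}$, note that $\langle u,y\rangle\langle v,z\rangle$ is sub-exponential as a product of sub-Gaussians, and apply Bernstein for each pair. A union bound over $9^{2Kr}$ pairs then gives the same $\sqrt{(Kr+\log(1/\delta))/N}$ rate with probability $1-\delta/2$. Alternatively, one can stack $w=(y,z)$ and read the bound off as an off-diagonal block of the covariance estimate for $w$ in dimension $2Kr$, which reuses the previous step directly.

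A final union bound over the two events completes the argument. The main obstacle I expect is bookkeeping rather than depth: constants must track the sub-Gaussian norm of $y$ relative to $z$, and the i.i.d. assumption on calibration tokens is implicit in the statement. Tokens within one prompt are not genuinely independent, so I would state independence explicitly as part of the sub-Gaussian assumption.
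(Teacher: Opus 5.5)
Your proposal is correct and follows the same architecture as the paper's proof: the resolvent identity $\hat{\mathbf{G}}^{-1}-(\mathbf{G}^*)^{-1}=\hat{\mathbf{G}}^{-1}(\mathbf{G}^*-\hat{\mathbf{G}})(\mathbf{G}^*)^{-1}$, Weyl's inequality giving $\|\hat{\mathbf{G}}^{-1}\|_2\le 2/\mu$, concentration of $\hat{\mathbf{G}}$ and $\hat{\mathbf{Q}}$ at rate $\sqrt{(Kr+\log(1/\delta))/N}$, and a final assembly. The differences are refinements, not a new route: your split puts the deterministic $\|\mathbf{Q}^*\|_2$ in front of the $\hat{\mathbf{G}}$-deviation, whereas the paper puts $\hat{\mathbf{Q}}$ there and relies on an unexplained bound $\|\hat{\mathbf{Q}}\|_2\le K_Q$ for a random quantity; and your $\varepsilon$-net argument, rather than the paper's bare appeal to matrix Bernstein, is the standard way to get the $Kr+\log(1/\delta)$ form and the cross-covariance bound that the paper only asserts ``similarly''.
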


\subsection{Proof of Theorem~\ref{thm:ssr_bound}}

\textbf{1. Error Decomposition.}
We decompose the error term $\Delta R = \hat{R} - R^*$ using the identity $A^{-1} - B^{-1} = A^{-1}(B - A)B^{-1}$. We have:
\begin{align}
    \hat{R} - R^* &= \hat{\mathbf{Q}} \hat{\mathbf{G}}^{-1} - \mathbf{Q}^* (\mathbf{G}^*)^{-1} \nonumber \\
    &= (\hat{\mathbf{Q}} - \mathbf{Q}^*) (\mathbf{G}^*)^{-1} + \hat{\mathbf{Q}} \left( \hat{\mathbf{G}}^{-1} - (\mathbf{G}^*)^{-1} \right) \nonumber \\
    &= \underbrace{(\hat{\mathbf{Q}} - \mathbf{Q}^*) (\mathbf{G}^*)^{-1}}_{\text{Term A}} + \underbrace{\hat{\mathbf{Q}} \hat{\mathbf{G}}^{-1} (\mathbf{G}^* - \hat{\mathbf{G}}) (\mathbf{G}^*)^{-1}}_{\text{Term B}}.
\end{align}

\textbf{2. Bounding the Inverses.}
By assumption, $\lambda_{\min}(\mathbf{G}^*) \ge \mu$, hence $\|(\mathbf{G}^*)^{-1}\|_2 \le 1/\mu$. When the empirical covariance concentration satisfies $\|\hat{\mathbf{G}}-\mathbf{G}^*\|_2 \le \mu/2$, Weyl's inequality gives $\lambda_{\min}(\hat{\mathbf{G}}) \ge \mu/2$. Thus, the spectral norms of the inverse matrices are bounded:
\begin{equation}
    \| (\mathbf{G}^*)^{-1} \|_2 \le \frac{1}{\mu}, \quad \| \hat{\mathbf{G}}^{-1} \|_2 \le \frac{2}{\mu}.
\end{equation}

\textbf{3. Concentration of Statistics in Subspace.}
This is the crucial step where the subspace dimensionality plays a role. We apply the Matrix Bernstein Inequality to bound the deviation of the empirical covariance matrix in the $Kr$-dimensional space.
For a sub-Gaussian vector $z \in \mathbb{R}^{Kr}$, the convergence rate of the sample covariance $\hat{\mathbf{G}}$ to $\mathbf{G}^*$ is governed by the dimension $Kr$. With high probability $1-\delta$:
\begin{equation}
    \| \hat{\mathbf{G}} - \mathbf{G}^* \|_2 \lesssim \sqrt{\frac{Kr + \log(1/\delta)}{N}}.
\end{equation}
Similarly for the cross-covariance $\mathbf{Q}$:
\begin{equation}
    \| \hat{\mathbf{Q}} - \mathbf{Q}^* \|_2 \lesssim \sqrt{\frac{Kr + \log(1/\delta)}{N}}.
\end{equation}

\textbf{4. Final Assembly.}
Substituting the bounds from Step 2 and Step 3 into the decomposition in Step 1:
\begin{align}
    \| \hat{R} - R^* \|_2 &\le \| \hat{\mathbf{Q}} - \mathbf{Q}^* \|_2 \frac{1}{\mu} + \| \hat{\mathbf{Q}} \|_2 \frac{2}{\mu} \| \mathbf{G}^* - \hat{\mathbf{G}} \|_2 \frac{1}{\mu} \nonumber \\
    &\le \frac{1}{\mu} \mathcal{O}\left( \sqrt{\frac{Kr}{N}} \right) + \frac{2K_Q}{\mu^2} \mathcal{O}\left( \sqrt{\frac{Kr}{N}} \right) \nonumber \\
    &= \mathcal{O}\left( \frac{1}{\mu} \left(1 + \frac{1}{\mu}\right) \sqrt{\frac{Kr}{N}} \right).
\end{align}
This completes the proof.

\subsection{Remark on Subspace Efficiency}
\label{remark:efficiency}

The bound derived above highlights a fundamental advantage of performing routing within the LoRA subspace. The convergence rate is dependent on the term $\sqrt{Kr}$.
Since our method operates in the bottleneck dimension where $Kr \ll d$ (e.g., $Kr \approx 64$ while the model width $d \approx 4096$), the numerator in the error bound is extremely small.
This implies that our heuristic router $\hat{R}$ requires significantly fewer calibration samples $M$ to reliably approximate the optimal routing logic compared to methods operating in the full parameter space.

\end{document}